\newcommand\contributionNote[1]{%
  \begingroup
  \renewcommand\thefootnote{}\footnote{\kern-5pt \textcolor{white}{\rule{5pt}{2ex}}#1}%
  \addtocounter{footnote}{-1}%
  \endgroup
}
\begin{document}
\begin{center}
    \vspace*{0.5cm}

 \LARGE{Mechanism of feature learning in deep fully connected networks and kernel machines that recursively learn features} 
 
\vspace*{0.5cm}

\large{Adityanarayanan Radhakrishnan$^{*, 1, 2}$\\ Daniel Beaglehole$^{*, 4}$ \\ Parthe Pandit$^{3}$ \\ \hspace{.1mm} Mikhail Belkin$^{3, 4}$}

\vspace*{0.5cm}

\normalsize{$^{1}$Massachusetts Institute of Technology.} \\
\normalsize{$^{2}$Broad Institute of MIT and Harvard.} \\
\normalsize{$^{3}$Hal\i c\i o\u glu Data Science Institute, UC San Diego.} \\
\normalsize{$^{4}$Computer Science and Engineering, UC San Diego.} \\
\normalsize{$^{*}$Equal contribution.}
\vspace*{0.5cm}
\end{center}

\setcounter{footnote}{3}
\begin{abstract} 
In recent years neural networks have achieved impressive results on many technological and scientific tasks.  Yet, the mechanism through which these models automatically select \textit{features}, or patterns in data, for prediction remains unclear.  Identifying such a mechanism is key to advancing performance and interpretability of neural networks and promoting reliable adoption of these models in scientific applications.  In this paper, we identify and characterize the mechanism through which deep fully connected neural networks learn features.  We posit the Deep Neural Feature Ansatz, which states that neural feature learning occurs by implementing the average gradient outer product to up-weight features strongly related to model output.  Our ansatz sheds light on various deep learning phenomena including emergence of spurious features and simplicity biases and how pruning networks can increase performance, the ``lottery ticket hypothesis.''  Moreover, the mechanism identified in our work leads to a backpropagation-free method for feature learning with any machine learning model.  To demonstrate the effectiveness of this feature learning mechanism, we use it to enable feature learning in classical, non-feature learning models known as kernel machines and show that the resulting models, which we refer to as \textit{Recursive Feature Machines}, achieve state-of-the-art performance on tabular data. 

\end{abstract}

\section{Introduction}

In the last few years, neural networks have led to major breakthroughs on a variety of applications including image generation~\cite{DALLE}, protein folding~\cite{Alphafold}, and language understanding and generation~\cite{GPT3}.  The ability of these models to automatically learn and utilize problem-specific \textit{features}, or patterns in data, for prediction is thought to be a central contributor to their success~\cite{FeatureLearningEmergenceShi, YangFeatureLearning}.  Thus, a major goal of machine learning research has been to identify the mechanism through which such neural feature learning occurs and which features are selected.  Indeed, understanding this mechanism provides the opportunity to design networks with improved reliability and model transparency needed for various scientific and clinical applications (e.g., natural disaster forecasting, clinical diagnostics). 

Prior works refer to neural feature learning as the change in a network's internal, intermediate representations through the course of training~\cite{YangFeatureLearning, BlakeCengizSelfConsistentDynamical}. Significant research effort~\cite{PreetumLimitations,YangFeatureLearning, YuanzhiLargeLearningRate, DeepLearningTheoryRobertsYaida, FiniteWidthCorrectionsBoris, BaiQuadNTK, NeuralTangentHierarchy, CatapultPhase, LibinQuadratic, ba2022high, 22abbe_staircase, FeatureLearningEmergenceShi, DLSRepresentationReLU, NeuralNetsMultiIndexSGD} has shown the benefits of feature learning in neural networks over non-feature learning models.   Yet, precise characterization of the feature learning mechanism and how features emerge remained an unsolved problem.

In this work, we posit the mechanism for feature learning in deep, nonlinear fully connected neural networks.  Informally, this mechanism corresponds to the approach of progressively re-weighting features in proportion to the  influence they have on the predictions.  Mathematically stated, if $W_i$ denotes the weights of a trained deep network at layer $i$, then Gram matrix $W_i^T W_i$, which we refer to as the \emph{$i$th layer Neural Feature Matrix} (NFM), is proportional to the \ajop of the network with respect to the input to this layer.

As an illustrative example of our results, consider neural networks trained to classify the presence of glasses  in images of faces.  In Fig.~\ref{fig: Overview}A, we compare the NFMs and performance of a non-feature learning network with fixed first layer weights and a feature learning network where all weights are updated.  While the NFM of the non-feature learning model (shown in red) is unchanging through training, the NFM of the feature learning model (shown in green) evolves to represent a pattern corresponding to glasses.  Even though both networks are able to fit the training data equally well, the feature learning model has significantly lower test classification error. A major finding of our work is that we are able to recover the key first layer NFM matrix without access to the internal structure of the neural network.  To illustrate this finding, in Fig.~\ref{fig: Overview}B, we show that the \ajop of a trained neural network with respect to the data is strongly correlated (Pearson correlation greater than $.97$) with the first layer NFM for a variety of classification tasks.

We empirically show that the feature learning mechanism identified in our work unifies previous lines of investigation, which study the relationship between neural feature learning and various aspects of neural networks such as network architecture~\cite{PreetumLimitations} and weight initialization scheme~\cite{YangFeatureLearning}.  In settings where feature learning is argued to occur (e.g.,  in finite width networks and networks initialized near zero), the \ajop is more correlated with the neural feature matrices.  Moreover, our mechanism explains prominent deep learning phenomena including the emergence of spurious features and biases in trained neural networks~\cite{SpuriousFeaturesSoheil, shah2020pitfalls}, grokking~\cite{Grokking}, and how pruning networks can increase performance~\cite{LotteryTicket}.

Importantly, as the \ajop can be computed given any predictor, our result provides a backpropagation-free approach for feature learning with any machine learning model including those models that previously had no feature learning capabilities.  Indeed, we can iterate between training a machine learning model and computing the \ajop of this model to learn features.  We apply this procedure to enable feature learning in class of non-feature learning models known kernel machines~\cite{KernelsBook, AronszajnKernels} and refer to the resulting algorithm as a \textit{Recursive Feature Machine} (RFM).  We demonstrate that RFMs achieve state-of-the-art performance across two tabular data benchmarks covering over $150$ datasets~\cite{FernandezDelgado, TabularDataBenchmark}, thereby highlighting the practical value of leveraging the feature learning mechanism identified in this work.

\begin{figure}[t]
    \centering
    \includegraphics[width=\textwidth]{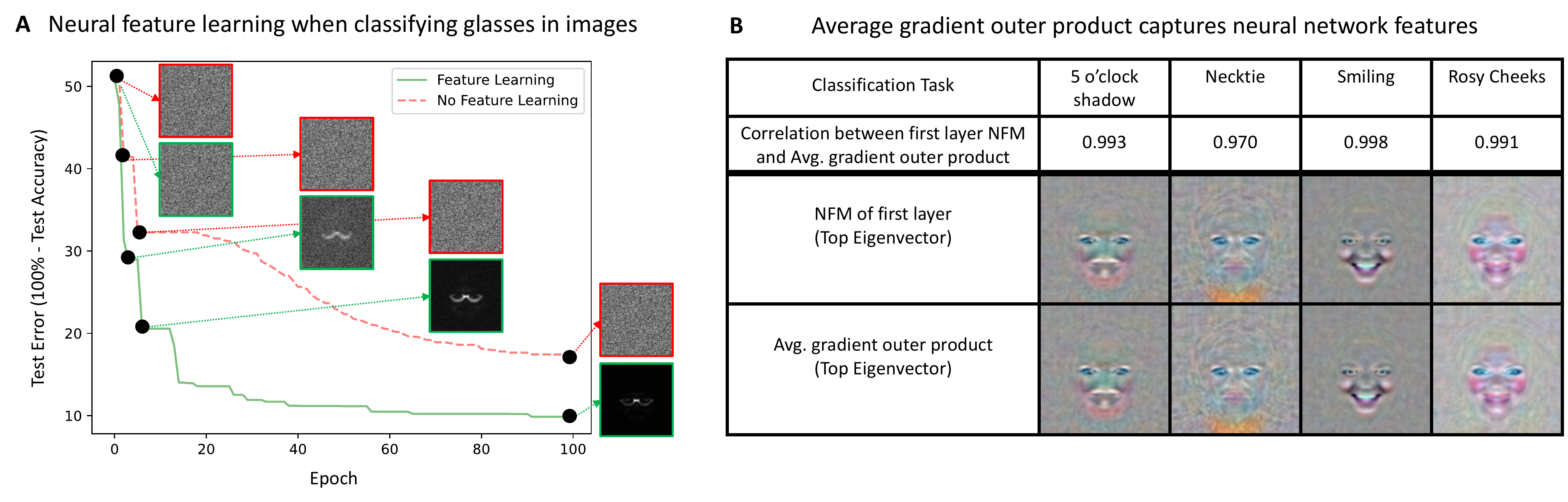}
    \caption{\textbf{(A)} A demonstration of neural feature learning. We train two fully connected neural networks with two-hidden-layers and ReLU activation to classify glasses in image data ($96 \times 96$ images) from the CelebA dataset~\cite{CelebA}, one in which the first layer is not trained and the other in which all layers are trained.  We visualize the diagonal of the first layer neural feature matrix (NFM), $W_1^T W_1$, of the trained networks and observe that the network with better performance learns to select pixels corresponding to glasses. \textbf{(B)}  We show the NFM of a layer is well approximated by the \ajop of the trained network taken with respect to the input of this layer.  The correlation between the first layer NFM and the \ajop with respect to the input data is greater than $0.97$ for four different classification tasks from the CelebA dataset.  We visualize the top eigenvector of these matrices and observe that both are visually indistinguishable and highlight relevant features for prediction.}
    \label{fig: Overview}
\end{figure}

\section{Results} 

Let $f: \mathbb{R}^{d} \to \mathbb{R}$ denote a fully connected network with $L$ hidden layers for $L > 1$, weight matrices $\{W_i\}_{i=1}^{L+1}$, and elementwise activation function $\phi$ of the form 
$$ f(x) = W_{L+1} h_L(x) \qquad ; \qquad h_{\ell}(x) = \phi(W_{\ell-1} h_{\ell-1}(x)) ~\text{for $\ell \in \{2, \ldots, L\}$}$$
with $h_1(x) = x$.  We refer to the terms $h_i(x)$ as the \textit{features} at layer $i$.  We can characterize how features $h_{i+1}(x)$ are constructed by understanding how $W_i$ scales and rotates elements of $h_i(x)$.  These scaling and rotation quantities are recovered mathematically from the eigenvalues and eigenvectors of the matrix $W_i^T W_i$, which is the NFM at layer $i$.  Hence, to characterize how features are updated in any layer of a trained neural network, it suffices to characterize how the corresponding layer's NFM is constructed.  Before mathematically stating how such NFMs are built, we connect NFM construction to the following intuitive procedure for selecting features. 

Given any predictor, a natural approach for identifying important features is to rank them by the magnitude of change in prediction upon perturbation.  When considering infinitesimally small feature perturbations on real-valued predictors, this approach is mathematically equivalent to computing the magnitude of the derivative of the predictor output with respect to each feature.  These magnitudes are computed by the gradient outer product of the predictor given by $(\nabla f(x)) (\nabla f(x))^T$ where $\nabla f(x)$ is the gradient of a predictor, $f$, at a point $x$.\footnote{For predictors with multi-dimensional outputs, we consider the Jacobian Gram matrix given by $(J f(x))^T (J f(x))$, where $J f(x)$ is the Jacobian of a predictor, $f$, at a point $x$.} 

Our main insight, the \emph{Deep Neural Feature Ansatz}, is that deep networks learn features by implementing the above approach for feature selection.  Mathematically stated, we posit that the NFM of any layer of a trained network is proportional to the \ajop of the network taken with respect to the input to this layer.  In particular, let $W_i$ denote the weights of layer $i$ of a deep, nonlinear fully connected neural network, $f$.  Given a sample $x$, let $h_i(x)$ denote the input into layer $i$ of the network, and let $f_i$ denote the sub-network of $f$ operating on $h_i(x)$.  Suppose that $f$ is trained on $n$ samples $\{(x_p, y_p)\}_{p=1}^n$.  Then throughout training, 
\begin{align*}\label{claim:agop_hypothesis}
\hspace{39mm}   W_i^T W_i \propto \frac{1}{n} \sum_{p=1}^{n}  \nabla f_i(h_i(x_p)) \, \nabla f_i(h_i(x_p))^T ~; \tag{Deep Neural Feature Ansatz}
\end{align*}
where $\nabla f_i(h_i(x_p))$ denotes the gradient of $f_i$ with respect to $h_i(x_p)$.\footnote{Additionally, we note that the right hand side of the ansatz can be viewed as a covariance matrix when the gradients are centered.}  We refer to this statement as the Deep Neural Feature Ansatz.  Formally, we prove that the ansatz holds when using gradient descent to layer-wise train (1) ensembles of deep fully connected networks and (2) deep fully connected networks with the trainable layer initialized at zero (see Section~\ref{sec: Theoretical Analysis} and Appendix~\ref{appendix: Theoretical results}).  We note that for the special case of the first layer and for networks with scalar outputs, the right hand side is related to the statistical estimator known as the expected gradient outer product~\cite{xia2002adaptive, RecursiveMultiIndex, EGOPSamory, HardleStokerGradientAveraging}.

\begin{figure}[t]
    \centering
    \includegraphics[width=\textwidth]{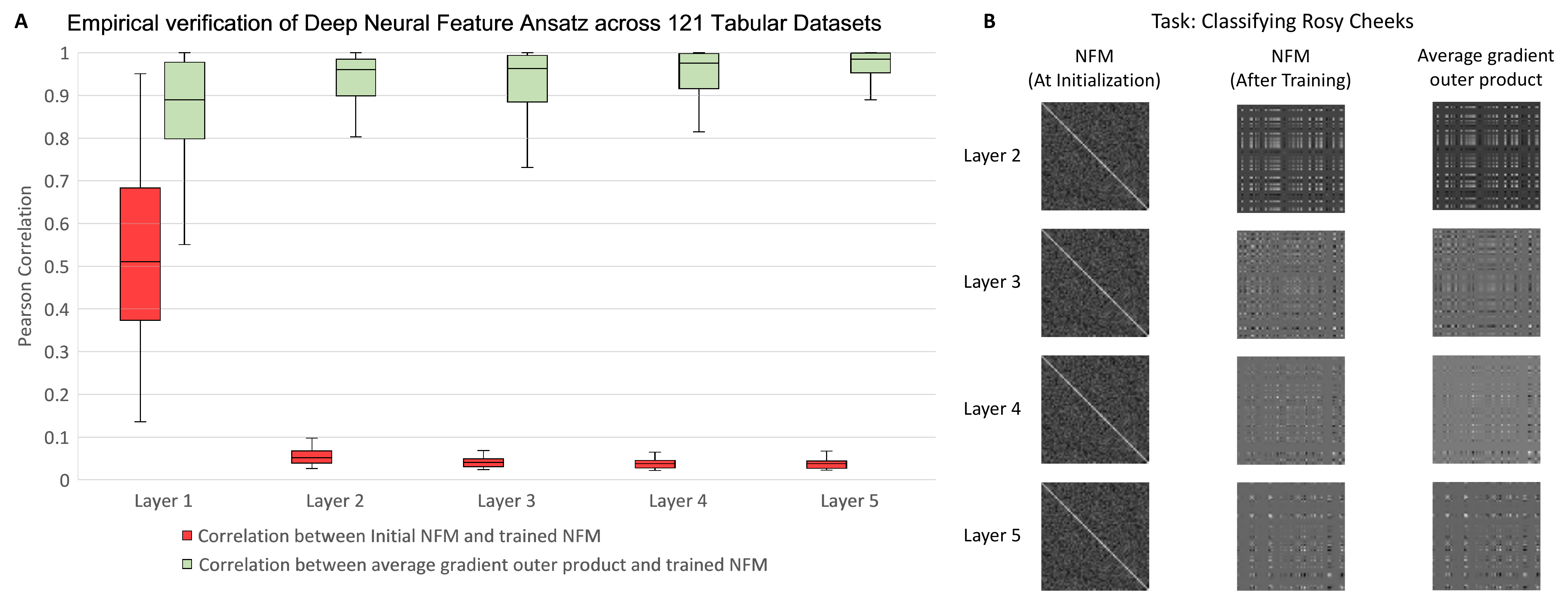}
    \caption{\textbf{(A)} The correlation between the initial NFM and trained NFM (red) and the correlation between \ajop and the trained NFM (green) for each layer in five-hidden-layer ReLU fully connected networks with $1024$ hidden units per layer trained on 121 tabular data classification tasks from~\cite{FernandezDelgado}.  To compute the trained NFM, we subtract the layer weights at initialization from the final weights before computing the Gram matrix.  The higher correlation with the initial NFM in layer 1 is due to the fact that these matrices are smaller than the $1024 \times 1024$ matrices in the remaining layers (on average $28.84$ features across all 121 tasks).  \textbf{(B)} A visualization of the $64 \times 64$ NFM at initialization, the trained NFM, and \ajop for layers $2$ through $5$ of the network trained to classify rosy cheeks from CelebA image data~\cite{CelebA}.  While the NFM at initialization is close to the identity matrix, the NFM after training has a qualitatively different structure that is captured by the \ajop (correlation $>.78$). While we omit the layer $1$ visualization here since the matrices are of size $27648 \times 27648$, the correlation between the first layer NFM after training and the corresponding \ajop is $.99$.}
    \label{fig: Deep Neural Feature Ansatz}
\end{figure}

Next, we empirically validate the ansatz when training all layers of deep fully connected networks across 127 classification tasks.  In particular, in Fig.~\ref{fig: Deep Neural Feature Ansatz}, we train fully connected networks with ReLU activation, five-hidden layers, $1024$ hidden units per layer using stochastic gradient descent on the 121 classification tasks from~\cite{FernandezDelgado}.  In our experiments, we initialize the first layer weights near zero to reduce the impact of the initial weights in computing correlations (see Appendix~\ref{appendix: Methods}).  In Fig.~\ref{fig: Deep Neural Feature Ansatz}A, we observe that the Pearson correlation between the NFMs after training and the \ajops have median value above $.85$ (shown in green) and are consistently higher than the corresponding correlation between the NFMs after training and those at initialization (shown in red).  Note that the gap between the two correlations is larger for layers 2 through 5 since these all have NFMs of dimension $1024 \times 1024$ while the first layer NFM depends on the dimension of the input data, which is on average $28.84$.  In addition to the $121$ tasks, we also validate the ansatz on six different image classification tasks across the CelebA dataset~\cite{CelebA} and Street View House Numbers (SVHN) dataset~\cite{SVHN} (see Appendix Fig.~\ref{appendix fig: Ansatz CelebA SVHN}).  In Fig.~\ref{fig: Deep Neural Feature Ansatz}B, we provide a visualization of the NFMs at initialization, NFMs after training, and \ajops for a fully connected network with five-hidden-layers, $64$ hidden units per layer with ReLU activation trained to classify rosy cheeks in CelebA images.  We observe that while NFMs after training have qualitatively different structure than the NFMs at initialization, such structure is accurately captured by \ajops.  In addition to the above experiments, we empirically validate that the ansatz holds for a variety of commonly used nonlinearities such as leaky ReLU~\cite{LeakyReLU}, hyperbolic tangent, sigmoid, and sinusoid and using standard optimization algorithms such as Adam~\cite{Adam}~(see Appendix Fig.~\ref{appendix fig: Activation}).

Our ansatz unifies several previous lines of investigation into feature learning.  In Appendix Figs.~\ref{appendix fig: Width} and \ref{appendix fig: Init}, we provide empirical evidence that the NFMs and \ajops have greater correlation for finite width networks and networks initialized near zero, which are key regimes in which feature learning is argued to occur~\cite{PreetumLimitations, YangFeatureLearning}.  In particular, we corroborate our results by reporting correlation between the NFMs after training and the \ajops of networks trained on the $121$ tabular classification tasks from~\cite{FernandezDelgado} across $5$ different widths and $5$ initialization schemes.

\subsection{Deep Neural Feature Ansatz sheds light on notable phenomena from deep learning.}  
\label{sec: Deep Learning Phenomena} 

Empirical studies of deep neural networks have brought to light a number of remarkable and often counter-intuitive phenomena.  We proceed to show that the mechanism of feature learning identified by our Deep Neural Feature Ansatz provides an explanation for several notable deep learning phenomena including (1) the emergence of spurious features~\cite{SpuriousFeaturesSoheil, AdversarialExamplesBugsNotFeatures} and simplicity biases~\cite{shah2020pitfalls, huh2021low} ; (2) grokking~\cite{Grokking} ; and (3) lottery tickets in neural networks~\cite{LotteryTicket}.  

\begin{figure}[t]
    \centering
    \includegraphics[width=\textwidth]{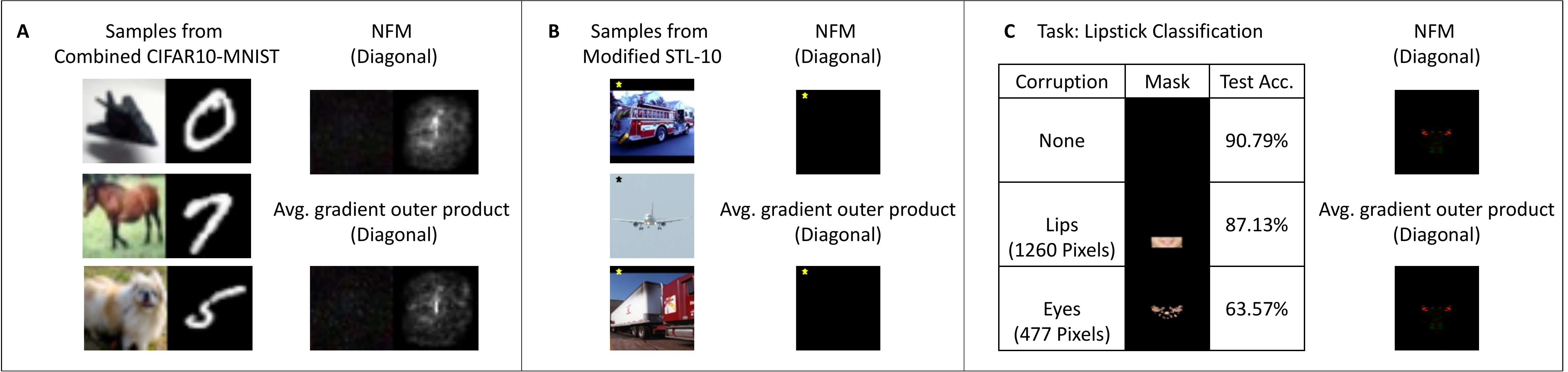}
    \caption{The Deep Neural Feature Ansatz enables identification of and simplicity biases and spurious features in fully connected networks. \textbf{(A)} When trained on $50000$ concatenated $32 \times 64$ resolution images from CIFAR10 and MNIST datasets, the diagonal of the first layer NFM of a five-hidden-layer fully connected ReLU network indicates that digit pixels are primarily used as features for classification.  The \ajop confirms that perturbing these pixels leads to the greatest change in predicted values.  \textbf{(B)} When trained on $1000$ $96 \times 96$ images of planes and trucks modified with a $7 \times 8$ pixel star pattern in the upper left corner, the diagonal of the first layer NFM of a five-hidden-layer ReLU fully connected network indicates the network relies solely on the star pattern for prediction. \textbf{(C)} When trained on $40000$ $96 \times 96$ images from CelebA to classify lipstick, the diagonal of the first layer NFM of a five-hidden-layer ReLU fully connected network indicates the network unexpectedly relies on eye pixels for classification. To corroborate this finding, we find that perturbing test samples by masking the lips leads to only a $3.66\%$ drop in test accuracy, but perturbing the test samples by masking the eyes based on the \ajop leads to a $27.22\%$ drop in test accuracy.}
    \label{fig: Spurious features}
\end{figure}
\paragraph{Spurious features and simplicity biases of neural networks.} The Deep Neural Feature Ansatz implies the emergence of simplicity biases and spurious features in fully connected neural networks.  Simplicity bias refers to the property of neural networks utilizing the ``simplest'' available features for prediction~\cite{shah2020pitfalls, huh2021low, SimplicityBiasHermann, SimplicityBiasPezeshki, SimplicityBiasPreetum} even when multiple features are equally indicative of class labels.  A consequence of simplicity bias is the emergence of spurious features, which are patterns that are correlated but are not necessarily causally related to the predictive targets~\cite{SpuriousFeaturesSoheil, AdversarialExamplesBugsNotFeatures}. 
Examples of neural networks leveraging spurious features include neural networks using the presence of fingers to detect band-aids~\cite{SpuriousFeaturesSoheil} or, problematically, using surgical skin markers to predict malignant skin lesions~\cite{SpuriousFeatureMelanoma}. Frequently,  these spurious features are ``simpler'' than the patterns we consider to be causally predictive. 
 Given their strong correlation with labels, perturbing these simple or spurious features will lead to a larger change in the prediction of a trained model than perturbing other available features, often including those causally related to the predictor.  Hence, the ansatz implies that neural feature learning will reinforce such features.

We demonstrate these phenomena empirically in Fig.~\ref{fig: Spurious features} upon training fully connected networks on three image classification tasks (see Appendix~\ref{appendix: Methods} for training methodology).  In Fig.~\ref{fig: Spurious features}A, we consider the task from~\cite{shah2020pitfalls} and train a model on $50,000$ concatenated images from CIFAR10 and MNIST datasets~\cite{CIFAR10, mnist-lecun1998}.  After training, we visualize the diagonal of the first layer NFM and observe that the model is simply relying on the digit for recognizing the image.  We observe that the \ajop is correlated with the NFM (Pearson correlation $0.8504$), which indicates that perturbing digit pixels leads to the greatest change in prediction.  In Fig.~\ref{fig: Spurious features}B, we show that neural networks will rely primarily on spurious features for prediction even when there are only few such features.  In particular, we trained fully connected networks to classify between $1000$ modified images of trucks and planes from the STL-10 dataset~\cite{STL10} with trucks containing a gold star pattern and planes containing a black star pattern in the upper left corner of the image.  Visualizing the diagonal of the first layer NFM and \ajop indicates that the network simply learns to rely only on the star pattern for prediction.  

Lastly, we showcase the power of our ansatz by using it to identify spurious features for a deep network trained to classify the presence of lipstick in CelebA images.  In Fig.~\ref{fig: Spurious features}C, we observe that the model on original test samples achieves $90.79\%$ accuracy.  Yet, by visualizing the diagonals of the NFM and \ajop, we observe that the trained model is unexpectedly relying on the eyes to determine whether the individual is wearing lipstick.  To further corroborate this finding, we observe that the test accuracy drops only slightly by $3.66\%$ when replacing the lips of all test samples with those of one individual.  If we instead replace the eyes of all test samples according to the mask given by the diagonal of the \ajop, test accuracy drops by $27.22\%$ to slightly above random chance.

\begin{figure}[!t]
    \centering
    \includegraphics[width=\textwidth]{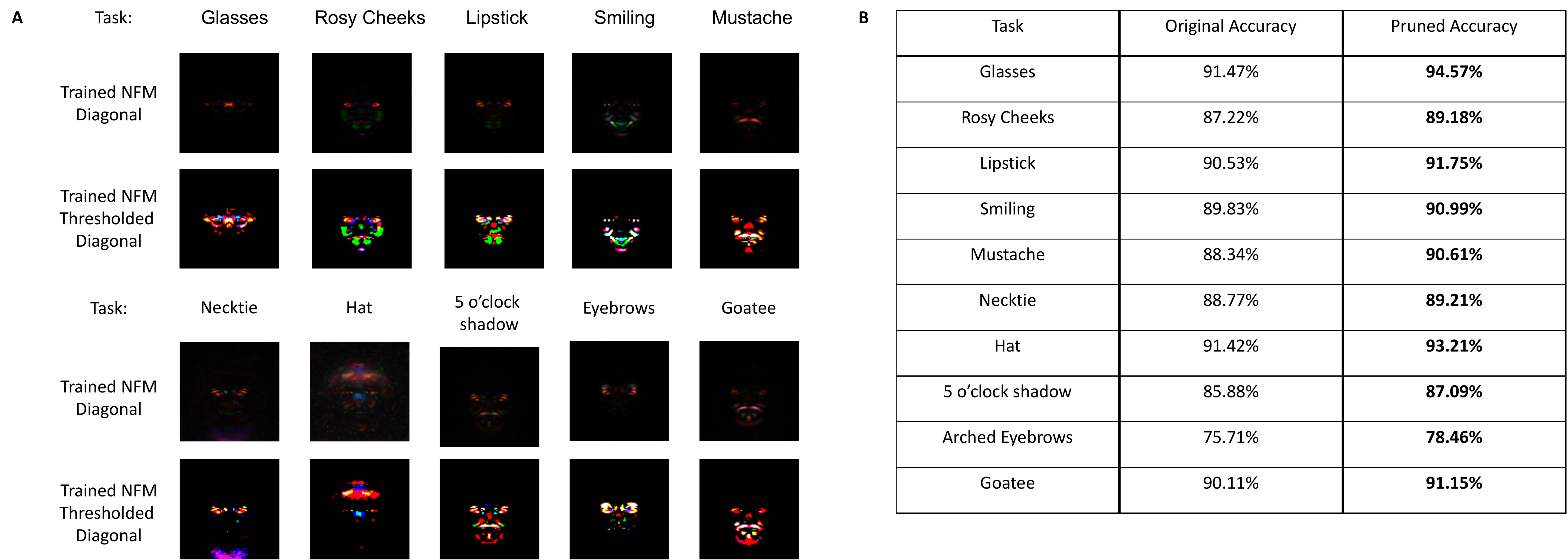}
    \caption{Lottery tickets in fully connected networks. \textbf{(A)} Visualizations of the diagonals of first layer Neural Feature Matrices from a two-hidden-layer, width 1024 ReLU network trained on classification tasks from CelebA and the diagonals after thresholding (replacing with pixel value $1$) the top $2\%$ of pixels.  There are 553 nonzero pixel values in the masked images. \textbf{(B)} Comparison in accuracy after re-training randomly initialized neural networks of the same architecture on the masked images (i.e., pruning $98\%$ of the corresponding columns of the first layer weights).}
    \label{appendix fig: Lottery Tickets}
\end{figure}

\paragraph{{Lottery Tickets.}}  
Introduced in~\cite{LotteryTicket}, the ``lottery ticket hypothesis'' refers to the claim that a randomly-initialized neural network contains a sub-network that can match or outperform the trained network when trained in isolation.  Such sub-networks are typically found by pruning away weights with the smallest magnitude~\cite{LotteryTicket}. The sparsity of feature matrices identified in this work provides direct evidence for this hypothesis.  Indeed, such sparsity is immediately evident when visualizing the diagonals of the feature matrix as in Fig.~\ref{appendix fig: Lottery Tickets}A.  

{In line with the lottery ticket hypothesis, we demonstrate that retraining neural networks after thresholding coordinates of the data corresponding to these sparse regions in the neural feature matrix leads to a consistent increase in performance in many settings.  In Fig.~\ref{appendix fig: Lottery Tickets}A and B, we prune $98\%$ of pixels in CelebA images according to the features identified by neural feature matrix and indeed, observe a consistent increase in predictive performance upon retraining a neural net on the thresholded features.}

\paragraph{{Grokking.}}  {Introduced in recent work~\cite{Grokking}, grokking refers to the phenomenon of deep networks exhibiting a dramatic increase in test accuracy when training past the point where training accuracy is $100\%$. We showcase a similar effect by training neural networks to classify between a subset of $96 \times 96$ resolution images of airplanes and trucks from the STL-10 dataset~\cite{STL10} (training details are presented in Appendix~\ref{appendix: Methods}).    We modify this subset with two key features to enable grokking: (1) the dataset is small with a large class imbalance between the two classes with $500$ examples of airplanes and $53$ examples of trucks and (2) there is a small star of pixels in the upper left corner of each image that is colored white or black based on the class label (see Fig.~\ref{appendix fig: Grokking}A).  The test set is balanced with $800$ examples of each class.}

{In Fig.~\ref{appendix fig: Grokking}B, we observe that grokking aligns with our ansatz.  Indeed, in Fig.~\ref{appendix fig: Grokking}B, we observe that the network can achieve near 100\% training accuracy without any feature learning, but test accuracy remains at roughly 80\%.  Yet, as training continues past this point, the average gradient outer product up-weights pixels corresponding to the star pattern, as indicated by the first layer NFM, and test accuracy improves drastically to $99.38\%$.}

\begin{figure}[!t]
    \centering
    \includegraphics[width=.7\textwidth]{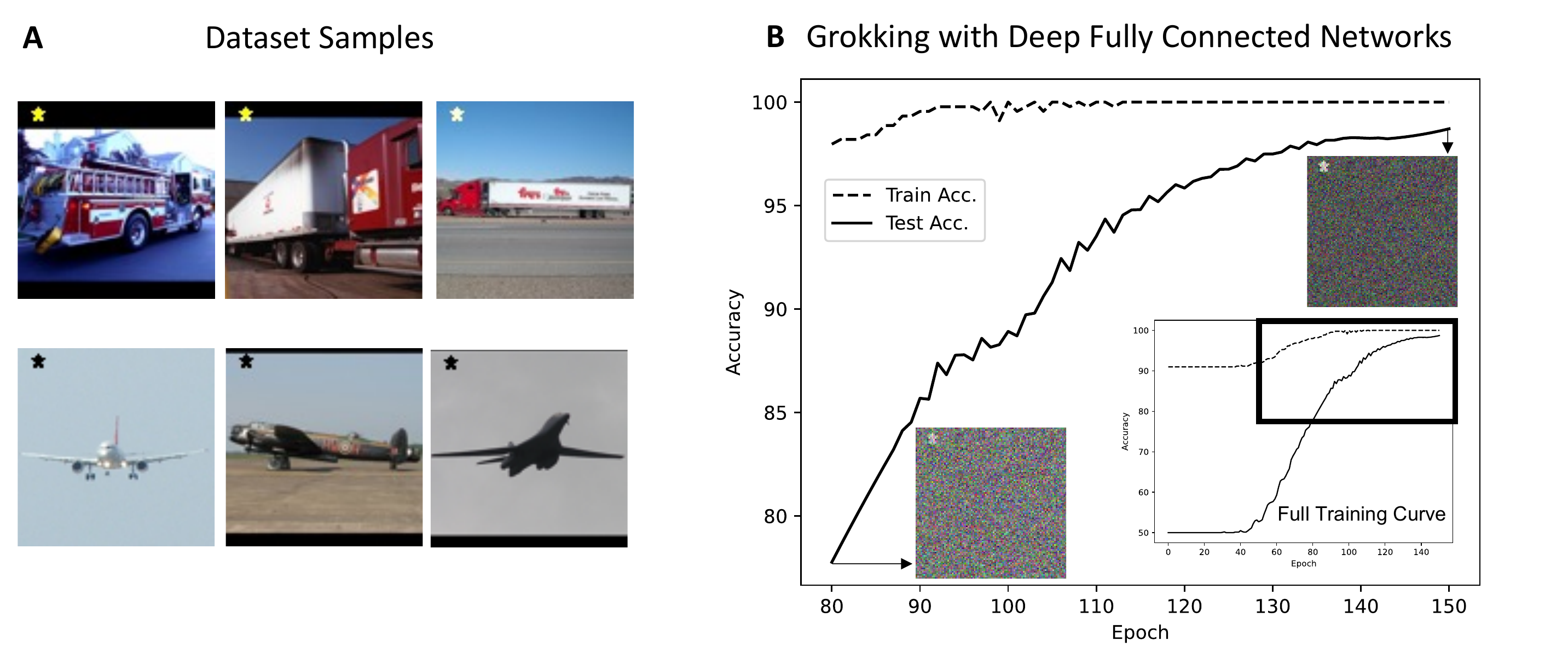}
    \caption{Grokking in fully connected networks. \textbf{(A)} Modified $96 \times 96$ resolution images from a subset of STL-10~\cite{STL10} in which a small star in the upper left indicates whether the image is a truck or an airplane. We use $553$ total training examples with $500$ examples of airplanes and $53$ examples of trucks. \textbf{(B)} A two-hidden-layer fully connected network quickly reaches near $100\%$ training accuracy.  Yet, as training continues past this point, the test accuracy rises drastically from $80\%$ to $99.38\%$.  Corresponding feature matrices (shown as inserts) indicate that test accuracy improved since the network has learned the star pixels that give away the class label.}
    \label{appendix fig: Grokking}
\end{figure}

\subsection{Integrating feature learning into  machine learning models.}

We now  leverage the mechanism of feature learning identified in the ansatz to provide an algorithm for integrating feature learning into any machine learning model.  We then showcase the power of this algorithm by applying it to classical, non-feature learning models known as kernel machines and achieving state-of-the-art performance on tabular datasets.  

A key insight of our ansatz is that neural feature learning occurs through the \ajop, which is a mathematical operation that can be applied to any function.  Given its universality, we can apply it to any machine learning model to enable feature learning.  In particular, we use an iterative two-step strategy that alternates between first training any predictor and then using the \ajop to directly learn features.

To demonstrate the power of this feature learning approach, we apply it to classical, non-feature learning  kernel machines~\cite{KernelsBook}  by (1) estimating a predictor 
using a kernel machine ; (2) learning features using the \ajop of the trained predictor ; and (3) repeating these steps after using the learned features to transform input to the predictor.  For completeness, background on kernels is provided in Appendix~\ref{appendix: kernel ridge regression background}.  Intuitively, training a kernel machine involves solving linear regression after applying a feature transformation on the data.  Unlike traditional kernel functions that are fixed in advance before training, we use kernel functions that incorporate a learnable feature matrix $M$ into the kernel function.  For simplicity, we utilize a generalization of the Laplace kernel given by $K_M(x, z) = \exp(-\gamma \|x - z\|_M)$ where $\gamma > 0$, $M$ is a positive semi-definite, symmetric feature matrix, and $\|x - z\|_M^2 := (x - z)^T M (x-z)$ denotes the Mahanolobis distance between data points $x, z$.\footnote{We note that in statistical literature this distance is defined by $d_M(x, z) = \sqrt{(x -z)^T M^{-1} (x-z)}$~\cite{MahanolobisDistance}, but here, we make use of the notation from metric learning literature~\cite{MetricLearningBook}, which omits the inverse.  We additionally note that Mahanolobis kernels can be extended to general, non-radial kernels by considering kernels of the form $K_M(x, z) = K(M^{\frac{1}{2}} x, M^{\frac{1}{2}} z)$.}  We now alternate between using kernel regression with the kernel function, $K_M$, to estimate a predictor and using the \ajop to update the feature matrix, $M$.  We refer to the resulting algorithm, presented in Algorithm~\ref{alg:RFM}, as a \textit{Recursive Feature Machine} (RFM).

\begin{algorithm}[!ht]
\caption{Recursive Feature Machine (RFM)}\label{alg:RFM}
\begin{algorithmic}
\Require $X, y, K_M, T$ \Comment{Training data: $(X, y)$, kernel function: $K_M$, and number of iterations: $T$}
\Ensure $\alpha, M$ \Comment{Solution to kernel regression: $\alpha$, and feature matrix: $M$}
\State $M = I_{d \times d}$ \Comment{Initialize $M$ to be the identity matrix}
\For{$t \in T$}
    \State $K_{train} = K_M(X, X)$\Comment{$K_M(X, X)_{i,j} := K_M(x_i, x_j)$} 
    \State $\alpha = yK_{train}^{-1}$
    \State $M = \frac{1}{n} \sum_{x \in X} (\nabla f(x)) (\nabla f(x))^T$\Comment{$f(x) = \alpha K_M(X, x)$ with $K_M(X, x)_{i} := K_M(x_i, x)$} 
\EndFor
\end{algorithmic}
\end{algorithm}

\begin{wrapfigure}{r}{.57\textwidth}
    \centering
    \includegraphics[width=.4\textwidth]{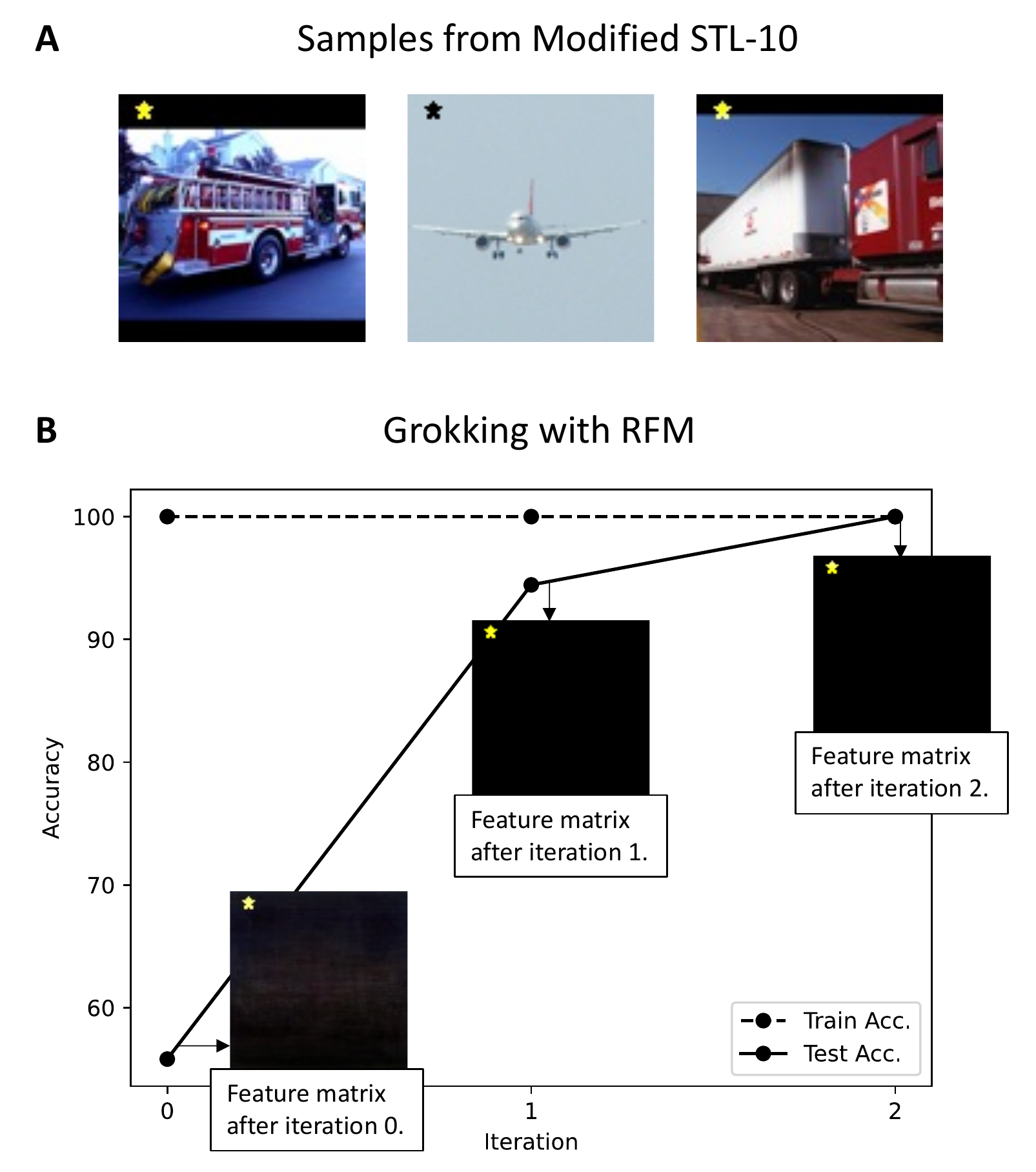}
    \caption{Grokking in RFMs trained on modified STL-10.  \textbf{(A)} Samples from modified STL-10 (see Section~\ref{sec: Deep Learning Phenomena} and Fig.~\ref{appendix fig: Grokking} for dataset details).  \textbf{(B)} While training accuracy is always $100\%$, iteration leads to a drastic rise in test accuracy from $55.8\%$ to $100\%$.  Feature matrices (shown as inserts) indicate that test accuracy improved since the RFM has learned the star pixels that give away the class label. }
    \label{fig: RFM Grokking}
\end{wrapfigure}

In Appendix~\ref{appendix: Methods} and Appendix Figs.~\ref{appendix fig: NN and RFM features CelebA} and~\ref{appendix fig: NN and RFM features SVHN Low Rank}, we compare features learned by RFMs and deep fully connected networks and demonstrate remarkable similarity between RFM features and first layer features of deep fully connected neural networks.  We show that the correlation between the top eigenvector of the first layer NFM after training and that of the RFM feature matrix, $M$, is consistently greater than $.99$ for $12$ different classification tasks from CelebA.  We also show high correlation between RFM features and first layer NFM features for SVHN and low rank polynomial regression tasks from~\cite{PreetumLimitations} and \cite{DLSRepresentationReLU}.  Lastly, in Appendix \ref{appendix: Kernel Alignment}, we discuss connections between RFMs and prior literature on kernel alignment~\cite{cristianini2001kernel,wang2015overview}.  

Given that RFMs use the same feature learning mechanism as neural networks, these models exhibit the deep learning phenomena discussed earlier, i.e., grokking, lottery tickets, and simplicity biases.  In Fig.~\ref{fig: RFM Grokking}, we showcase that RFMs perform grokking on the same dataset used in Section~\ref{sec: Deep Learning Phenomena} and Fig.~\ref{appendix fig: Grokking}.  We show that RFMs exhibit lottery ticket and simplicity bias phenomena in Appendix Figs.~\ref{appendix fig: RFM Lottery Ticket} and ~\ref{appendix fig: RFM Simplicity Bias}.

\subsection{Recursive Feature Machines provide state-of-the-art results on tabular data.}  We demonstrate the immediate practical value of the integrated feature learning mechanism by demonstrating that RFMs achieve state-of-the-art results on two tabular benchmarks containing over $150$ datasets.  The first benchmark we consider is from~\cite{FernandezDelgado}, which compares the performance of $179$ different machine learning methods including neural networks, tree-based models, and kernel machines on $121$ tabular classification tasks.  In Fig.~\ref{fig: 121 Datasets}A, we show RFMs outperform these classification methods, kernel machines using the Laplace kernel, and kernel machines using the recently introduced Neural Tangent Kernel (NTK)~\cite{NTKJacot} across the following commonly used performance metrics:

\begin{itemize}
    \item Average accuracy: The average accuracy of the classifier across all datasets.
    \item P90/P95: The percentage of datasets on which the classifier obtained accuracy within 90\%/95\% of that of the best performing model.  
    \item PMA: The percentage of the maximum accuracy achieved by a classifier averaged across all datasets. 
    \item Friedman rank: The average rank of the classifier across all datasets.      
\end{itemize}

\begin{figure}[t]
    \centering
    \includegraphics[width=\textwidth]{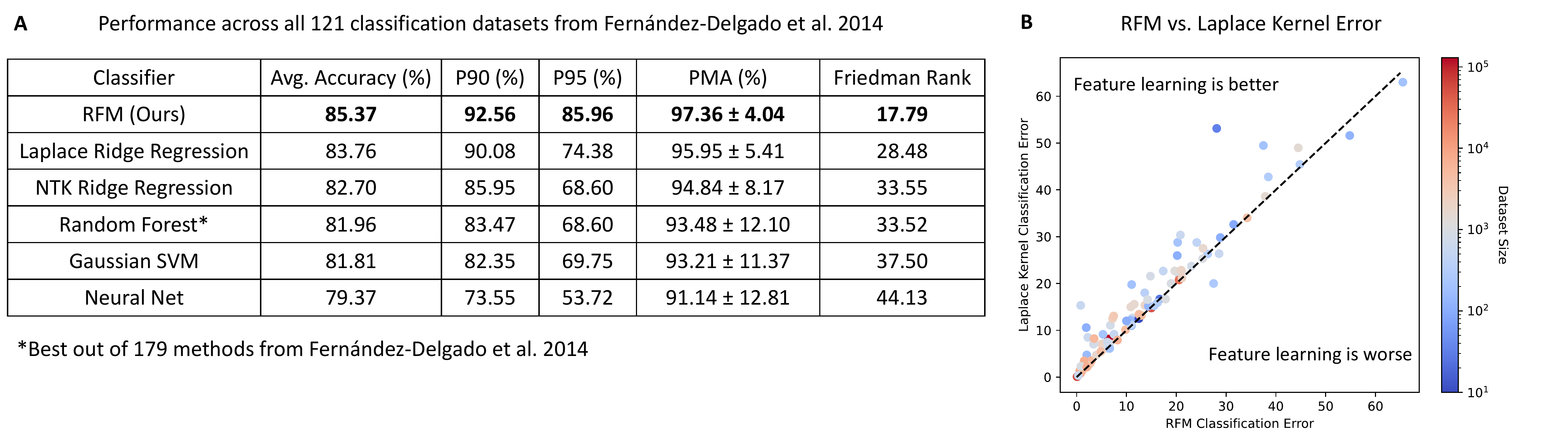}
    \caption{\textbf{(A)} Comparison of $182$ models including RFMs, NTKs, random forests, and fully connected neural networks on $121$ tabular datasets from~\cite{FernandezDelgado}.  All metrics, models, and training details are outlined in Appendix~\ref{appendix: Methods}.  RFMs took 40 minutes to achieve these results while Neural Nets took 5 hours (both measurements are in wall time on a server with two Titan Xp GPUs). \textbf{(B)} To determine the benefit of feature learning, we compare the error rate (100\% - accuracy) between RFMs and Laplace kernels, which are equivalent to RFMs without feature learning.}
    \label{fig: 121 Datasets}
\end{figure}

We note that while some of the datasets  contain up to $130,000$ training examples, RFMs are computationally fast to train through the use of pre-conditioned linear system solvers such as EigenPro~\cite{EigenPro, EigenProGPU}.  Indeed, RFMs take 40 minutes  to achieve these results while neural networks took 5 hours (both measurements are in wall time on a server with two Titan Xp GPUs).  In Fig.~\ref{fig: 121 Datasets}B, we analyze the benefit of feature learning by comparing the difference in error (100\% - accuracy) between RFMs and the classical Laplace kernel, which is equivalent to an RFM without feature learning.  We observe that the Laplace kernel generally results in higher error than the RFM for larger datasets.

In Appendix Fig.~\ref{fig: Transformers} and Tables~\ref{appendix: table reg without cat},~\ref{appendix: table class without cat},~\ref{appendix: table reg with cat}, and~\ref{appendix: table class with cat} we additionally compare RFMs to two transformer models~\cite{FTTransformer, SAINTTransformer}, ResNet~\cite{ResNet}, and two gradient boosting tree models~\cite{scikit-learn, XGBoost} across regression and classification tasks on a second tabular benchmark~\cite{TabularDataBenchmark}. Consistent with our findings on the first tabular benchmark, we observe that RFMs generally outperform tree-based models and neural networks at a fraction of the computational cost ($3600$ compute hours for RFMs while all other methods are with $20,000$ compute hours).

\subsection{Theoretical Evidence for Deep Neural Feature Ansatz}
\label{sec: Theoretical Analysis} 

We now present theoretical evidence for the Deep Neural Feature Ansatz. A summary of all theoretical results for the Deep Neural Feature Ansatz is presented in Table~\ref{tab:models}.  

\begin{table}[h!]
\centering
\begin{tabular}{cccccccc}
    \toprule
    Result & Activation & Steps & Depth & Outer layers & Initialization & GIA & \# Samples\\
    \midrule    
    Proposition~\ref{prop: 1 example} & Any & Any & Any & Fixed & Zero & No & 1 \\
    Proposition~\ref{prop: 1 step, multiple examples} & Any & 1 & Any & Fixed & Zero & No & Any \\
    Proposition~\ref{prop:linear} & Linear & Any & $2$ & Fixed, i.i.d. & Zero & No & Any\\  
    Proposition~\ref{prop:layerbothlayers} & Linear & $2$ & $2$ & Trainable, i.i.d & Zero & No & Any\\        
    \Cref{thm:nonlinear_resampled} & ReLU & Any & Any & Fixed, i.i.d. & Any & Yes & Any\\
    \bottomrule
\end{tabular}
\caption{Settings for which we prove the Deep Neural Feature Ansatz. \textit{Activation} refers the type of network activation function. \textit{Steps} refers to the number of steps of gradient descent for which the proof holds. \textit{Depth} refers to the depth of the neural network considered. \textit{Outer layers} describes how the layers other than the first are initialized and trained. \textit{Initialization} refers to the initialization method of the first layer weights. \textit{GIA} indicates whether the gradient independence ansatz is required for the result to hold. \textit{\# Samples} denotes the number of training samples considered.}
\label{tab:models}
\end{table}

To provide intuition as to when the ansatz holds, we first analyze the general setting in which we train models, $f: \mathbb{R}^{d} \to \mathbb{R}$, of the form $f(x) = g(Bx)$ with $g: \mathbb{R}^{k} \to \mathbb{R}$ by updating the weight matrix $B$ using gradient descent.  This setting encompasses any type of neural network in which the first layer is fully connected and the only trainable layer.  We begin with Propostion~\ref{prop: 1 example} below (proof in Appendix~\ref{appendix: Theoretical results}), which establishes the ansatz for such functions trained on one training example $(x, y) \in \mathbb{R}^{d} \times \mathbb{R}$.

\begin{prop}
\label{prop: 1 example}
Let $f(z) = g(Bz)$ with $f: \mathbb{R}^{d} \to \mathbb{R}$ and $g: \mathbb{R}^{k} \to \mathbb{R}$.  Given one training sample $(x, y)$, suppose that $f$ is trained to minimize $\frac{1}{2}(y - f(x))^2$ using gradient descent.  Let $B^{(\ell)}$ denote $B$ after $\ell$ steps of gradient descent.  If $B^{(0)} = \mathbf{0}$ and $f_t(z) := g(B^{(t)}z)$, then for all time steps $t$: 
\begin{align*}
    \nabla f_t(z) \nabla f_t(z)^T \propto {B^{(t)}}^T B^{(t)}~.
\end{align*}
\end{prop}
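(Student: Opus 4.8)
The plan is to exploit the fact that, with a single training example, the gradient descent dynamics on $B$ are confined to a rank-one subspace determined by $x$. Writing the loss as $\frac{1}{2}(y - g(Bx))^2$, the chain rule gives $\nabla_B \left[\frac{1}{2}(y-g(Bx))^2\right] = -(y - g(Bx))\,\nabla g(Bx)\,x^T$, a rank-one $k \times d$ matrix whose row space is spanned by $x^T$. Since $B^{(0)} = \mathbf{0}$ trivially has this form, an induction on $t$ shows that $B^{(t)} = u_t x^T$ for some $u_t \in \mathbb{R}^k$: if $B^{(t)} = u_t x^T$ then $B^{(t)} x = \|x\|^2 u_t$, so one step of gradient descent with step size $\eta$ yields $B^{(t+1)} = u_{t+1} x^T$ with $u_{t+1} = u_t + \eta\,(y - g(\|x\|^2 u_t))\,\nabla g(\|x\|^2 u_t)$. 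The precise recursion for $u_t$ plays no role in the proof; only the structural fact $B^{(t)} = u_t x^T$ matters.

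Given this structure, both sides of the claimed identity are immediate. On one hand, ${B^{(t)}}^T B^{(t)} = x\,u_t^T u_t\,x^T = \|u_t\|^2\, x x^T$. On the other hand, by the chain rule $\nabla f_t(z) = {B^{(t)}}^T \nabla g(B^{(t)} z) = x\left(u_t^T \nabla g((x^T z) u_t)\right)$, i.e. the fixed vector $x$ times the scalar $s_t(z) := u_t^T \nabla g((x^T z) u_t)$ depending on $z$ and $t$. Hence $\nabla f_t(z)\,\nabla f_t(z)^T = s_t(z)^2\, x x^T$. Both matrices are scalar multiples of $x x^T$, so they are proportional, with proportionality constant $s_t(z)^2/\|u_t\|^2$ whenever $u_t \neq \mathbf{0}$.

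Two minor points round out the argument. First, the proportionality constant genuinely depends on the evaluation point $z$ (and on $t$); this is unavoidable, since the left-hand side varies with $z$ while the right-hand side does not, so ``$\propto$'' is understood pointwise in $z$. Second, in the degenerate case $u_t = \mathbf{0}$ --- which occurs at $t = 0$ and could in principle recur --- both sides vanish identically and the proportionality holds trivially. I expect the only real work to be careful bookkeeping in the induction and being explicit about the scalar $s_t(z)$ and the $z$-dependence of the constant; there is no analytic difficulty, and the argument uses nothing about $g$ beyond differentiability.
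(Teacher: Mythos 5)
Your proposal is correct and follows essentially the same route as the paper's proof: establish by induction that $B^{(t)}$ has the rank-one form (a vector times $x^T$), then observe that both ${B^{(t)}}^T B^{(t)}$ and $\nabla f_t(z)\nabla f_t(z)^T$ reduce to scalar multiples of $xx^T$. Your added remarks on the $z$-dependence of the proportionality constant and the degenerate case $u_t=\mathbf{0}$ are careful touches the paper leaves implicit, but the argument is the same.
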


Informally, this example demonstrates that feature learning is maximized when the ansatz holds.  Namely, if initialization is nonzero, then ${B^{(t)}}^T B^{(t)}$ contains a term corresponding to initialization ${B^{(0)}}^T B^{(0)}$, which disrupts exact proportionality in the ansatz but also affects the quality of features learned.  In the extreme case of non-feature learning models such as neural network Gaussian processes~\cite{NealNNGP} where the first layer weights are drawn from a standard Gaussian distribution and fixed, then ${B^{(t)}}^T B^{(t)}$ has rank almost surely $\min(k, d)$ while $\nabla f_t(z) \nabla f_t(z)^T$ is a rank 1 matrix.  From this perspective, we argue that the ansatz provides a precise characterization of feature learning. 

The difficulty in generalizing Proposition~\ref{prop: 1 example} to multiple steps is that the terms $\nabla g(B^{(t)}x_i)$ are no longer necessarily equal for all examples after 1 step.  Thus, instead of proving the result for this general class of functions, we instead turn to classes of functions corresponding to fully connected networks.  In particular, Theorem~\ref{thm:nonlinear_resampled} (proof in Appendix~\ref{appendix: Theoretical results}) establishes the ansatz in the more general setting of deep, nonlinear fully connected networks.  Before stating this theorem, we introduce the relevant notation for deep neural networks and the gradient independence ansatz used in our proof. 
\paragraph{Notation.} 
Let $f: \mathbb{R}^{d} \to \mathbb{R}$ denote an $L$ hidden layer network with element-wise activation $\phi: \mathbb{R} \to \mathbb{R}$ of the form:
\begin{align}\label{eq:g_def}
    g(x) = a \tran \phi_{(L)}(x)  \qquad  \qquad \phi_{(\ell)}(x) =     \begin{cases}
        \phi\round{\sqrt{\frac{c_\phi}{k_{\ell}}} W_{\ell} \phi_{(\ell-1)}(x)} & \ell \in \{1, \ldots, L\} \quad\\
        x & \ell = 0
    \end{cases}  
\end{align}
where $a \in \mathbb{R}^{k_{L}}$ and $ W_\ell \in \Real^{k_\ell \times k_{\ell-1}}$ for $\ell \in [L]$ with $k_\ell \in \mathbb{Z}_+$ and $k_0 = d$.  We denote row $k$ of weight matrix $W_\ell$ by $W_{\ell,k} \in \Real^{d \times 1}$.

\vspace{3mm}

\noindent \textbf{Gradient Independence Ansatz (GIA).\footnote{This is often called an assumption in the literature (e.g., Assumption 2.2 in \cite{yang2019scaling}). We prefer the word ansatz following the terminology from~\cite{BlakeCengizSelfConsistentDynamical}, as this is a simplifying principle rather than a true mathematical assumption.}}
\label{assum:grad_independence} \textit{In computing gradients, whenever we multiply by a weight matrix, $W_{\ell}$, we can instead multiply by an i.i.d. copy of $W_{\ell}$ without changing the gradient.}

\vspace{3mm}

We note that the GIA has been used in a range of works on analyzing neural networks including~\cite{yang2019scaling,pennington2017resurrecting,yang2017mean,chen2018dynamical,xiao2018dynamical,yang2018deep,schoenholz2016deep} and is implicit in the original NTK derivation~\cite{NTKJacot}.\footnote{This condition is required for establishing the closed form of the deep NTK presented in~\cite{NTKJacot} as observed in~\cite{yang2019scaling} but is not needed to establish transition to linearity (e.g., \cite{BelkinNTKLinear}).} In~\cite{CNTKArora}, the authors rigorously prove the gradient independence ansatz for fully connected neural networks with ReLU  activation functions.

\begin{theorem}
\label{thm:nonlinear_resampled}
Let $f$ denote an $L$-hidden layer network with ReLU activation ${\phi(z) = \max\{0,z\}}$. Suppose we sample weights $a_{k'}, W_{\ell}$ for $\ell > 1$ in an i.i.d. manner so that $\mathbb{E}\sbrac{a_{k'}^2} = 1$, $\mathbb{E}\sbrac{W_{\ell,k''}^2} = 1$, $\mathbb{E}[a_{k'}]=0$, and $\mathbb{E}\sbrac{W_{\ell,k''}} = 0$. Suppose $W_1$ is fixed and arbitrary.  Let $\{(x_i, y_i)\}_{i=1}^{n} \subset \mathbb{R}^{d} \times \mathbb{R}$.  If $x \sim \mathcal{N}(0,I_d)$, then
\begin{align*}
    \frac{1}{k_1} W_{1}\tran W_{1} &= \mathbb{E}_{x} \sbrac{\lim_{k_2,\ldots,k_L \to \infty} \mathbb{E}_{a} \left[ \nabla_x f(x) {\nabla_x f(x)}\tran \right]  }~.
\end{align*}
\end{theorem}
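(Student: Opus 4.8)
The plan is to differentiate $f$ explicitly by backpropagation, invoke the GIA to replace each backward-pass copy of a hidden weight matrix by a fresh independent copy, and then collapse the resulting matrix product one layer at a time via the law of large numbers, so that only a first-layer factor survives whose expectation over the Gaussian input yields $\tfrac{1}{k_1}W_1^\top W_1$.

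First I would record the chain rule. Abbreviating $\widetilde{W}_\ell := \sqrt{c_\phi/k_\ell}\,W_\ell$, writing $z_\ell$ for the preactivation at layer $\ell$ and $D_\ell := \mathrm{diag}(\phi'(z_\ell))$, which for ReLU is a diagonal $0/1$ matrix and hence satisfies $D_\ell^2 = D_\ell$, one has
\begin{align*}
\nabla_x f(x) = \widetilde{W}_1^\top D_1 \widetilde{W}_2^\top D_2 \cdots \widetilde{W}_L^\top D_L\, a .
\end{align*}
Since $a$ is independent of everything else and $\mathbb{E}[aa^\top] = I_{k_L}$, taking $\mathbb{E}_a$ and using $D_L^2 = D_L$ gives
\begin{align*}
\mathbb{E}_a\!\left[\nabla_x f(x)\,\nabla_x f(x)^\top\right] = \widetilde{W}_1^\top D_1 \cdots \widetilde{W}_{L-1}^\top D_{L-1}\bigl(\widetilde{W}_L^\top D_L \widetilde{W}_L\bigr) D_{L-1}\widetilde{W}_{L-1}\cdots D_1 \widetilde{W}_1 .
\end{align*}

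Next I would apply the GIA to the backward pass used to form $\nabla_x f$: each occurrence of a hidden weight $W_\ell$ ($\ell \ge 2$) is replaced by an independent copy $W_\ell'$ (the fixed $W_1$ is untouched), so the innermost factor above becomes $S_L := \tfrac{c_\phi}{k_L}(W_L')^\top D_L W_L'$ with $W_L'$ independent of $D_L$. Conditioning on $D_L$ and using $\mathbb{E}[W_{L,k,i}'W_{L,k,j}'] = \delta_{ij}$ gives $\mathbb{E}[S_L \mid D_L] = \tfrac{c_\phi\,\mathrm{tr}(D_L)}{k_L}\,I_{k_{L-1}}$, with entrywise fluctuations of order $k_L^{-1/2}$; and as $k_L\to\infty$ the active fraction $\mathrm{tr}(D_L)/k_L$ concentrates on $\Pr\!\left[\langle W_{L,1}, \phi_{(L-1)}(x)\rangle > 0\right]$, which equals $\tfrac12$ (exactly, for a symmetric weight law such as Gaussian; for a general mean-zero law in the wide limit by the CLT). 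Since the ReLU normalization constant is $c_\phi = 2$, this forces $S_L \to I_{k_{L-1}}$. Substituting and using $D_{L-1}^2 = D_{L-1}$ reproduces the same expression with $L$ replaced by $L-1$, so iterating the limits $k_L\to\infty,\,k_{L-1}\to\infty,\,\dots,\,k_2\to\infty$ telescopes the product down to $\widetilde{W}_1^\top D_1 \widetilde{W}_1 = \tfrac{c_\phi}{k_1}W_1^\top D_1 W_1$, in which $D_1$ depends on $x$ alone because $W_1$ is fixed. Finally, $(D_1)_{kk} = \mathbf{1}\{\langle W_{1,k},x\rangle > 0\}$ and $x\sim\mathcal{N}(0,I_d)$ is symmetric, so $\mathbb{E}_x[D_1] = \tfrac12 I_{k_1}$ and $\mathbb{E}_x\!\left[\tfrac{c_\phi}{k_1}W_1^\top D_1 W_1\right] = \tfrac{c_\phi}{2k_1}W_1^\top W_1 = \tfrac{1}{k_1}W_1^\top W_1$, which is the claim.

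The hard part will be making the layerwise collapse rigorous: I need to be sure that plugging the limiting identity $S_\ell \to I$ back into the nested matrix product does not let the approximation errors accumulate. Taking the width limits in the iterated, innermost-first order keeps this tractable, since at each stage the matrix being collapsed has fixed dimension (only finitely many indices remain live, as $k_2,\dots,k_{\ell-1}$ are still finite when $k_\ell\to\infty$, and the relevant summands are i.i.d. over the index being sent to infinity), so entrywise convergence suffices and the collapse is exact in the limit; proving the statement for a genuinely simultaneous limit $k_2,\dots,k_L\to\infty$ would instead require a quantitative estimate showing that the $O(k_\ell^{-1/2})$ fluctuations, once conjugated by the remaining $\widetilde{W}_{\ell-1}',D_{\ell-1},\dots$ factors, still vanish. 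The two remaining ingredients --- using the GIA to decouple $D_\ell$ from the backward weights, and the fact that the threshold probabilities equal $\tfrac12$ (exact for the Gaussian input $x$, and for the hidden preactivations via symmetry of the weight law or the CLT) --- are comparatively routine, the first being precisely the simplifying principle the theorem is permitted to assume.
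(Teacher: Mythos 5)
Your proposal is correct and follows essentially the same route as the paper's proof: differentiate by the chain rule, invoke the GIA to decouple the backward-pass weights from the activation patterns, collapse the layers one at a time via the law of large numbers in an innermost-first iterated limit (each collapse contributing a factor $\tfrac{1}{2}$ that cancels against $c_\phi=2$), and finish with $\mathbb{E}_x[D_1]=\tfrac12 I$ for Gaussian input. The only differences are cosmetic (matrix notation for $S_\ell$ versus the paper's explicit index sums) plus your added care in noting that the inner-layer activation probability equals $\tfrac12$ only via symmetry of the weight law or a CLT in the wide limit, a point the paper glosses over.
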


Theorem~\ref{thm:nonlinear_resampled} can be directly applied in a recursive fashion to prove the ansatz holds for any layer of a deep network.  In particular, we simply consider the layer-wise training scheme in which we show the ansatz holds for the first layer, and then fix the first layer and apply Theorem~\ref{thm:nonlinear_resampled} to the second layer and proceed inductively.  
\section{Summary and Discussion}

\paragraph{Summary of results.} Characterizing the mechanism of neural feature learning has been an unresolved problem that is key to advancing performance and interpretability of neural networks.  In this work we posited the Deep Neural Feature Ansatz, which stated that neural feature learning occurs by up-weighting the features most influential on model output, a process that is formulated mathematically in terms of the \ajop.  Our ansatz unified previous lines of investigation into neural feature learning and explained various deep learning phenomena.  An important insight from our ansatz was that the \ajop could instead be used to learn features with any machine learning model.  We showcased the power of this insight by using it to enable feature learning with classical, non-feature learning models known as kernel machines.  The resulting algorithm, which we referred to as Recursive Feature Machines, achieved state-of-the-art performance on tabular benchmarks containing over $150$ classification and regression tasks.

\paragraph{Connections and implications.}  We conclude with a discussion of connections between our results and machine learning literature as well as implications of our results. 

\vspace{3mm} 

\noindent \textit{Advancing interpretability in deep learning.}  A key area of practical interest is understanding and interpreting how neural networks make predictions.  There is a rich literature on gradient-based methods for understanding features used by deep networks for image classification tasks~\cite{ZeilerVisualizing, GradCAM, DeepLift}.  These methods utilize gradients of trained networks to identify patterns that are important for prediction in a single data point. Rather than focus on features relevant for individual samples, our ansatz directly provides a characterization of Neural Feature Matrices, which capture features the network selects across all data points.  We demonstrated how our ansatz shed light on the emergence of spurious features in neural networks and how it could be leveraged to identify such spurious features.  We envision that the transparency provided by our ansatz can serve as a key tool for increasing interpretability and mitigating biases of neural networks more generally. 

\vspace{3mm}

\noindent \textit{Building state-of-the-art models at a fraction of the cost by streamlining feature learning.}  Neural networks simultaneously learn a predictor and features through backpropagation.  While such simultaneous learning is a remarkable aspect of neural networks, our ansatz shows that it can also lead to inefficiencies during training.  For example, in the initial steps of training, the features are selected based on a partially trained predictor, and the resulting features can be uninformative.  To streamline the feature learning process, we showed that we can instead train a predictor and then estimate features directly via the \ajop. This approach has already led to an improvement in performance and a reduction in time in our experiments, specifically from 5 hours for training neural networks to 40 minutes for RFM across $121$ tabular data tasks from~\cite{FernandezDelgado}.  A natural next direction is to extend this direct feature learning approach for fully connected networks to streamline training of general network architectures including convolutional networks, graph neural networks, and transformers.  We envision that using the \ajop as an alternative to backpropagation could reduce the sizeable training costs associated with state-of-the-art models, including large language models for which fully connected networks form the backbone.

\vspace{3mm} 

\noindent  \textit{The role of width: Transition to linearity vs. feature learning}.  Under the NTK initialization scheme, very wide neural networks undergo a transition to linearity and implement kernel regression with a kernel function that is not data adaptive and depends entirely on the network architecture~\cite{NTKJacot, BelkinNTKLinear}.  On the other hand, more narrow  neural networks simultaneously learn both a predictor and features.  Thus, network width  modulates between two different regimes: one in which networks implement non-data-adaptive kernel predictors and another in which networks learn features.  While this remarkable property highlights the flexibility of deep networks, it also illustrates their complexity.  Indeed, simply increasing width under a particular initialization scheme increases the representational power of a neural network while decreasing its ability to learn features.  In contrast, by separating predictor learning and feature learning into separate subroutines, we can circumvent such modelling complexity without sacrificing performance. 

\vspace{3mm}
\noindent {\textit{The role of depth.}  Our Deep Neural Feature Ansatz provided a way to capture features at deeper layers of a fully connected network by using the average gradient outer product.  Yet, our implementation of RFMs leveraged this feature learning mechanism to capture only features of the input, which  corresponded to the first layer features learned by fully connected networks. Interestingly, despite only using first layer features, RFMs provided state-of-the-art results on tabular datasets, matching or outperforming deep fully connected networks across a variety of tasks.  Thus, an interesting direction of future work is to understand the exact nature of deep feature learning and to characterize the architectures, datasets, and settings for which deep feature learning is beneficial.}

\vspace{3mm}

\noindent \textit{Empirical NTK}. Recently, a line of works have studied the connection between kernel learning and neural networks through the time-dependent evolution of the NTK~\cite{fort2020deep}. An insightful work~\cite{long2021properties} showed that  the \textit{after kernel}, i.e. the empirical NTK at the end of training, matches the performance of the original network.   Interestingly,~\cite{atanasov2021neural} highlighted another benefit of empricial NTK by showing that as a neural network is trained, the empirical NTK increases alignment with the ideal kernel matrix.  Given the similarity in features learned between RFMs and neural networks, we believe that RFMs may be an effective means of approximating the after kernel without training neural networks. 

\vspace{3mm}

\noindent \textit{Connections to other  statistical and machine learning methods.}  Our result connects neural feature learning to a number of classical methods from statistics and machine learning, which we discuss below.  
\begin{itemize}[itemindent=0pt,labelsep=1ex,leftmargin=4ex]

\item \textit{Supervised dimension reduction.} 
The problem of identifying key variables necessary to understand the response  function (called sufficient dimensionality reduction in~\cite{li2018sufficient}) has been investigated in depth in the statistical literature.
In particular, estimates of the gradient of the target function can be used to identify relevant coordinates for the target function \cite{xia2002adaptive, RecursiveMultiIndex}. A series of works proposed methods that simultaneously learn the regression function and its gradient by non-parametric estimation~\cite{mukherjee2006learning, GradientEstimation}. The gradients can then be used to improve performance on downstream prediction tasks~\cite{GradientWeights}. Gradient estimation is particularly useful for coordinate selection in {\it multi-index models}, for which the regression function $f^*$ has the form $f^*(x) = g(Ux)$, where $U$ is a low rank matrix. Similar to neural feature learning, the multi-index estimator in~\cite{RecursiveMultiIndex} iteratively identifies the relevant subspace by learning the regression function and its gradient, but makes use of kernel smoothers.  {A line of recent work identifies the benefits of using neural networks for such multi-index (or single-index) problems by analyzing networks after 1 step of gradient descent~\cite{ba2022high, DLSRepresentationReLU} or showing that networks identify the principal subspace, $U$, through multiple steps of training~\cite{bietti2022learning,  NeuralNetsMultiIndexSGD}.}  Another line of work \cite{PHD} estimates the Principal Hessian Directions, the eigenvectors of the average Hessian matrix of the target function, to identify relevant coordinates.  Finally, a parallel line of research on ``active subspace'' methods in the context of \textit{dynamical systems}  has recently become a topic of active investigation~\cite{constantine2015active}.

\item  \textit{Metric and manifold learning}.  Updating feature matrices can also be viewed as learning a data-dependent  Mahalanobis distance, i.e. a distance $d_M(x, z) = \sqrt{ (x - z)^T M (x - z)}$ where $M$ is the feature matrix. This connects to a large body of literature on metric learning with numerous applications to various supervised and unsupervised learning problems~\cite{MetricLearningBook}.  Furthermore, we believe that feature learning methods such as neural networks or RFMs may benefit from incorporating ideas from the unsupervised and semi-supervised manifold learning and nonlinear dimensionality reduction literature~\cite{roweis2000nonlinear,belkin2003laplacian}. We also note that some of the early work on Radial Basis Networks explicitly addressed metric learning as a part of kernel function construction~\cite{poggio1990networks}.

\item \textit{FisherFaces and EigenFaces.} We further note the strong similarity between the eigenvectors of feature matrices (e.g., Figs.~\ref{fig: Overview},~\ref{appendix fig: NN and RFM features CelebA}) analyzed in this work and those given by EigenFace~\cite{EigenFace1, EigenFaces2}, and FisherFace~\cite{FisherFaces} algorithms.  While EigenFaces are obtained in a purely unsupervised fashion, the FisherFace algorithm uses labeled images of faces and Fisher's Linear Discriminant~\cite{FisherLinearDiscriminant} to learn a linear subspace for dimensionality reduction.  The first layer of neural networks and RFMs also learn linear subspaces based on labeled data but in a recursive way, using nonlinear classifiers. 

\item \textit{Debiasing.} Debiasing is a statistical procedure of recent interest in the statistics literature~\cite{zhang2014confidence}.  Given a high-dimensional problem with a hidden low-dimensional structure, debiasing involves first performing variable selection by using methods such as Lasso~\cite{tibshirani1996regression} or sparse PCA~\cite{jankova2021biased} and then fitting a low-dimensional model to the selected coordinates. We note that this procedure is similar to a single step of RFM. Moreover, both RFMs and neural networks can be viewed as a non-linear iterative version of the debiasing procedure with soft coordinate selection.

\item \textit{Expectation Maximization (EM).}  The RFM algorithm is reminiscent of the EM algorithm~\cite{moon1996expectation} with alternating estimation of the kernel predictor (M-step) and the feature matrix $M$ (E-step).  From this viewpoint, developing estimators for the feature matrix other than the sample covariance estimator considered in this work is an interesting future direction.  Moreover, depending on properties of the data and the target function, the feature matrix may be structured. Such structure could be leveraged to develop more sample efficient estimators for the M-step.

\item \textit{Boosting.} The mechanism of neural feature learning is reminiscent of boosting~\cite{freund1997decision} where a ``weak learner,'' only slightly correlated with the optimal predictor, is ``boosted'' by repeated application. Feature learning can similarly improve a suboptimal predictor as long as its average gradient outer product estimate is above the noise level. 
\end{itemize}

\paragraph{Looking forward.}  Overall, our work provides new insights into the operational principles of neural networks and how such principles can be leveraged to design new models with improved performance, computational simplicity, and transparency.  We envision that the mechanism of neural feature learning identified in this work will be key to improving neural networks and developing such new models.

\section*{Data Availability}
\label{sec: data availabillity}
All image datasets considered in this work, i.e., CelebA, SVHN, CIFAR10, MNIST and STL-10, are publicly available for download via PyTorch.  Tabular data from~\cite{FernandezDelgado} is available to download via \url{https://github.com/LeoYu/neural-tangent-kernel-UCI} provided by~\cite{NTKSmallDatasets}.  Tabular data from~\cite{TabularDataBenchmark} is available to download via \url{https://github.com/LeoGrin/tabular-benchmark}.  

\section*{Code Availability}
Code for neural network experiments is available at \url{https://github.com/aradha/deep_neural_feature_ansatz}.  Code for RFMs is available at \url{https://github.com/aradha/recursive_feature_machines/tree/pip_install}. 

\label{sec: code availability}

\section*{Acknowledgements} 
A.R. is supported by the Eric and Wendy Schmidt Center at the Broad Institute.  A.R. thanks Caroline Uhler for support on this work, and A.R., M.B. thank her for many insightful discussions over the years. 
We are grateful to Phil Long for useful insight into centering the \ajop. We thank Daniel Hsu, Joel Tropp and Jason Lee for valuable literature references.
We acknowledge support from the National Science Foundation (NSF) and the Simons Foundation for the Collaboration on the Theoretical Foundations of Deep Learning\footnote{\url{https://deepfoundations.ai/}} through awards DMS-2031883 and \#814639 as well as the  TILOS institute (NSF CCF-2112665). This work used the programs (1) XSEDE (Extreme science and engineering discovery environment)  which is supported by NSF grant numbers ACI-1548562, and (2) ACCESS (Advanced cyberinfrastructure coordination ecosystem: services \& support) which is supported by NSF grants numbers \#2138259, \#2138286, \#2138307, \#2137603, and \#2138296. Specifically, we used the resources from SDSC Expanse GPU compute nodes, and NCSA Delta system, via allocations TG-CIS220009.

\bibliographystyle{abbrv}
\bibliography{aux/references}

\appendix 
\newpage 

\section{Theoretical evidence for Deep Neural Feature Ansatz}
\label{appendix: Theoretical results}

We present the proof of Proposition~\ref{prop: 1 example} below.  
\begin{proof}
Gradient descent with learning rate $\eta$ proceeds as follows: 
\begin{align*}
    B^{(t+1)} = B^{(t)} + \eta \nabla g(B^{(t)}x) (y - g(B^{(t)}x)) x^T ~.
\end{align*}
If $B^{(0)} = \mathbf{0}$, then by induction $B^{(t)} = \alpha^{(t)} x^T$ for all time steps $t$ where $\alpha^{(t)} \in \mathbb{R}^{k}$.  Then, we have that
\begin{align*}
    \nabla f_t(z) \nabla f_t(z)^T &= {B^{(t)}}^T \nabla g(B^{(t)}z) g(B^{(t)}z)^T B^{(t)} \\
    &= x {\alpha^{(t)}}^T \nabla g(B^{(t)}z) g(B^{(t)}z)^T \alpha^{(t)} x^T \\ 
    &= (x x^T) ({\alpha^{(t)}}^T \nabla g(B^{(t)}z) g(B^{(t)}z)^T \alpha^{(t)}) \\
    &\propto x x^T ~.
\end{align*}
Similarly, we have that
\begin{align*}
    {B^{(t)}}^T B^{(t)} &= x {\alpha^{(t)}}^T \alpha^{(t)} x^T = (x x^T) ({\alpha^{(t)}}^T \alpha^{(t)} ) \propto x x^T ~. 
\end{align*}
\end{proof}

We now extend the proposition above to the setting where we have multiple training samples, and we train for one step of gradient descent.  

\begin{prop}
\label{prop: 1 step, multiple examples}
Let $f(z) = g(Bz)$ with $f: \mathbb{R}^{d} \to \mathbb{R}$ and $g: \mathbb{R}^{k} \to \mathbb{R}$. Assume $g(0) = 0$ and $\nabla g(0) \neq \mathbf{0}$.  Given $n$ training samples $\{(x_i, y_i)\}_{i=1}^n$, suppose that $f$ is trained to minimize $\frac{1}{2}\sum_{i=1}^{n}(y_i - f(x_i))^2$ using gradient descent.  If $B^{(0)} = \mathbf{0}$ and $f_1(z) := g(B^{(1)}z)$, then 
\begin{align*}
    \nabla f_1(z) \nabla f_1(z)^T \propto {B^{(1)}}^T B^{(1)}~.
\end{align*}
\end{prop}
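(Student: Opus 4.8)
The plan is to observe that a single step of gradient descent from $B^{(0)} = \mathbf{0}$ produces a matrix $B^{(1)}$ of rank at most one, after which both sides of the claimed proportionality collapse to scalar multiples of one fixed rank-one matrix, exactly as in the proof of Proposition~\ref{prop: 1 example}.

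First I would write out the gradient step for the objective $\mathcal{L}(B) = \frac{1}{2} \sum_{i=1}^n (y_i - g(Bx_i))^2$, which reads $B^{(1)} = B^{(0)} + \eta \sum_{i=1}^n (y_i - g(B^{(0)}x_i))\, \nabla g(B^{(0)}x_i)\, x_i^T$. Substituting $B^{(0)} = \mathbf{0}$ and using $g(0) = 0$ gives $B^{(1)} = \eta\, \nabla g(0)\, \big(\sum_{i=1}^n y_i x_i\big)^{T} =: u v^T$ with $u := \eta\, \nabla g(0) \in \mathbb{R}^k$ and $v := \sum_{i=1}^n y_i x_i \in \mathbb{R}^d$. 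The hypothesis $\nabla g(0) \neq \mathbf{0}$ ensures $u \neq \mathbf{0}$, so $B^{(1)}$ has rank exactly one when $v \neq \mathbf{0}$, and is the zero matrix otherwise (in which case both sides of the statement vanish and there is nothing to prove).

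The rest is a direct computation. On the feature-matrix side, ${B^{(1)}}^T B^{(1)} = v\, (u^T u)\, v^T = \|u\|^2\, v v^T$. On the gradient side, the chain rule gives $\nabla f_1(z) = {B^{(1)}}^T \nabla g(B^{(1)}z) = v\,\big(u^T \nabla g(B^{(1)}z)\big) = c(z)\, v$, where $c(z) := u^T \nabla g(u v^T z) \in \mathbb{R}$ is a scalar. Squaring, $\nabla f_1(z)\,\nabla f_1(z)^T = c(z)^2\, v v^T$, so comparing the two displays yields $\nabla f_1(z)\,\nabla f_1(z)^T = \big(c(z)^2/\|u\|^2\big)\, {B^{(1)}}^T B^{(1)}$, which is the claim.

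There is no real obstacle here; the one point to be careful about is that the proportionality constant $c(z)^2/\|u\|^2$ depends on the evaluation point $z$, which is consistent with how $\propto$ is used throughout (the ansatz is asserted pointwise in $z$), together with the separate, trivial treatment of the degenerate case $\sum_i y_i x_i = \mathbf{0}$. As in Proposition~\ref{prop: 1 example}, the underlying reason the result holds is that training from zero initialization confines $B$ to a rank-one, data-dependent direction, which automatically aligns the neural feature matrix with the gradient outer product; the difficulty flagged in the main text — that the gradients $\nabla g(B^{(t)} x_i)$ need no longer agree across samples after the first step — is precisely what prevents this argument from extending to $t > 1$ without additional structure.
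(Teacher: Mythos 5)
Your proof is correct and follows essentially the same route as the paper's: compute the single gradient step from $B^{(0)}=\mathbf{0}$ to get $B^{(1)} = \eta\,\nabla g(0)\,\bigl(\sum_i y_i x_i\bigr)^T$, observe it is rank one, and check that both $\nabla f_1(z)\nabla f_1(z)^T$ and ${B^{(1)}}^T B^{(1)}$ reduce to scalar multiples of $\bigl(\sum_i y_i x_i\bigr)\bigl(\sum_i y_i x_i\bigr)^T$. Your write-up via the explicit factorization $B^{(1)} = uv^T$ is a slightly cleaner packaging of the same computation, and your explicit handling of the degenerate case $\sum_i y_i x_i = \mathbf{0}$ and the $z$-dependence of the proportionality constant are points the paper leaves implicit.
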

\begin{proof}
Gradient descent proceeds as follows: 
\begin{align*}
    B^{(t+1)} = B^{(t)} + \eta \sum_{i=1}^{n} \nabla g(B^{(t)}x_i) (y_i - g(B^{(t)}x_i)) x_i^T ~.
\end{align*}
If $B^{(0)} = \mathbf{0}$, then $B^{(1)} = \eta \nabla g(0) \sum_{i=1}^{n} y_i x_i^T$.  Thus, we have that 
\begin{align*}
    \nabla f_1(z) \nabla f_1(z)^T &= {B^{(1)}}^T \nabla g(B^{(1)}z) g(B^{(1)}z)^T B^{(t)} \\
    &= \eta^2 \left(  \sum_{i=1}^{n} y_i x_i \nabla g(0)^T \right) \nabla g(B^{(1)}z) g(B^{(1)}z)^T  \left( \nabla g(0) \sum_{i=1}^{n} y_i x_i^T \right) \\ 
    &= \eta^2 \left(  \sum_{i=1}^{n} y_i x_i \right) \left( \sum_{i=1}^{n} y_i x_i^T \right)  \left( \nabla g(0)^T \nabla g(B^{(1)}z) g(B^{(1)}z)^T  \nabla g(0) \right) \\
    &\propto \left(  \sum_{i=1}^{n} y_i x_i \right) \left( \sum_{i=1}^{n} y_i x_i^T \right) ~.
\end{align*}
Similarly, we have: 
\begin{align*}
    {B^{(1)}}^T B^{(1)} &= \eta^2 \left(  \sum_{i=1}^{n} y_i x_i \nabla g(0)^T \right) \left( \nabla g(0) \sum_{i=1}^{n} y_i x_i^T \right) \\
    &= \eta^2 \left(  \sum_{i=1}^{n} y_i x_i  \right) \left( \sum_{i=1}^{n} y_i x_i^T \right) \| \nabla g(0) \|_2 \\
    &\propto \left(  \sum_{i=1}^{n} y_i x_i  \right) \left( \sum_{i=1}^{n} y_i x_i^T \right)~. 
\end{align*}
\end{proof}

Following a similar argument to that of Proposition~\ref{prop: 1 example}, we now prove the ansatz for 1-hidden layer linear neural networks.

\begin{prop}
\label{prop:linear}
    Let $f: \mathbb{R}^{d} \to \mathbb{R}$ denote a two layer neural network of the form 
    $$f(x) = A \frac{1}{\sqrt{k}} Bx ; $$
    where $A \in \mathbb{R}^{1 \times k}, B \in \mathbb{R}^{k \times d}$.  Suppose only $B$ is trainable.  Let $B^{(t)}$ and $f^{(t)}$ denote updated weights after $t$ steps of gradient descent on the dataset $(X, y) \in \mathbb{R}^{d \times n} \times \mathbb{R}^{1 \times n}$ with constant learning rate $\eta > 0$. If $\{A_i^{(0)}\}_{i=1}^{k}$ are i.i.d. random variables $\mathbb{E}[A_i^2] = 1$ and $B^{(0)} = \mathbf{0}$,     
    \begin{align*}
        \lim_{k \to \infty} {B^{(t)}}\tran B^{(t)} = \lim_{k \to \infty} \nabla {f^{(t)}} \nabla {f^{(t)}}\tran ~; 
    \end{align*}
    where $\nabla {f^{(t)}}$ is the gradient of ${f^{(t)}}$.\footnote{Note that since ${f^{(t)}}$ is linear, the gradient is constant and independent of the point at which it is taken.}  
\end{prop}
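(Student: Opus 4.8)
The plan is to exploit the rank-one structure that zero initialization forces on $B^{(t)}$, exactly as in Proposition~\ref{prop: 1 example}, and then to isolate the single scalar through which the width $k$ enters. First I would write out gradient descent on $B$. With residual row vector $r^{(t)} = y - f^{(t)}(X) \in \mathbb{R}^{1\times n}$, the update is
\begin{align*}
B^{(t+1)} = B^{(t)} + \frac{\eta}{\sqrt{k}}\, A^{T} r^{(t)} X^{T},
\end{align*}
where $A$ is fixed throughout (only $B$ is trainable). Since $B^{(0)} = \mathbf{0}$, an easy induction gives $B^{(t)} = \frac{1}{\sqrt{k}} A^{T}\gamma^{(t)}$ with the row vector $\gamma^{(t)} = \eta\sum_{s<t} r^{(s)} X^{T} \in \mathbb{R}^{1\times d}$. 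Plugging this back into $f$ yields $f^{(t)}(x) = \frac{1}{k}\|A\|^2\,\gamma^{(t)} x = c_k\,\gamma^{(t)} x$ with $c_k := \frac{1}{k}\|A\|^2$, so the recursion for $\gamma^{(t)}$ becomes the purely deterministic linear recursion $\gamma^{(t+1)} = \gamma^{(t)}(I - \eta c_k X X^{T}) + \eta y X^{T}$, $\gamma^{(0)} = 0$, which depends on $k$ and on $A$ only through the scalar $c_k$.

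Next, since the $A_i$ are i.i.d.\ with $\mathbb{E}[A_i^2]=1$, the strong law of large numbers gives $c_k \to 1$ almost surely. Because each $\gamma^{(t)}$ is obtained from $c_k$ by a fixed finite number of affine operations, hence is continuous (indeed polynomial) in $c_k$, we get $\gamma^{(t)} \to \gamma^{(t)}_{\infty}$ where $\gamma^{(t)}_{\infty}$ solves the same recursion with $c_k$ replaced by $1$. Finally I would compute both sides directly: using $B^{(t)} = \frac{1}{\sqrt{k}} A^{T}\gamma^{(t)}$ and the fact that the (constant) input gradient of the linear map $f^{(t)}$ is $\nabla f^{(t)} = \frac{1}{\sqrt{k}} {B^{(t)}}^{T} A^{T} = c_k\,{\gamma^{(t)}}^{T}$, one finds
\begin{align*}
{B^{(t)}}^{T} B^{(t)} = c_k\,{\gamma^{(t)}}^{T}\gamma^{(t)}, \qquad \nabla f^{(t)}\,{\nabla f^{(t)}}^{T} = c_k^{2}\,{\gamma^{(t)}}^{T}\gamma^{(t)}.
\end{align*}
Taking $k\to\infty$ and using $c_k\to 1$, both expressions converge to ${\gamma^{(t)}_{\infty}}^{T}\gamma^{(t)}_{\infty}$, which is the claim.

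The only real subtlety — and the step I would be most careful about — is the bookkeeping of the $k$-dependence: one must notice that $\gamma^{(t)}$ itself depends on $k$ (through the partially trained residuals), so it cannot be treated as a fixed object, and its convergence must be argued. Once the entire $k$-dependence has been collapsed into the scalar $c_k$, the SLLN plus continuity of a finite composition of affine maps closes everything, and the discrepancy between the two sides is precisely the factor $c_k$ versus $c_k^{2}$, both of which tend to $1$. I would also state explicitly that the limits are meant almost surely over the draw of $A$ — equivalently, on the probability-one event $\{c_k\to 1\}$ — so that the two deterministic limits coincide.
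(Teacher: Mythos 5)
Your proof is correct, and it reaches the conclusion by a genuinely different and arguably cleaner route than the paper's. The paper argues by induction on $t$: it expands ${B^{(t+1)}}^T B^{(t+1)}$ and $\nabla f^{(t+1)}\nabla {f^{(t+1)}}^T$ into nine terms each, passes to the limit term by term using $\frac{1}{k}A^{(0)}{A^{(0)}}^T \to 1$, and checks that the two nine-term limits agree; the rank-one structure of $B^{(t)}$ is never made explicit there. You instead observe up front that zero initialization forces $B^{(t)} = \frac{1}{\sqrt{k}}A^T\gamma^{(t)}$ for every $t$, which collapses the entire dependence on $k$ and on the randomness of $A$ into the single scalar $c_k = \|A\|^2/k$. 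The two sides then become exactly $c_k\,{\gamma^{(t)}}^T\gamma^{(t)}$ and $c_k^2\,{\gamma^{(t)}}^T\gamma^{(t)}$, and the strong law of large numbers plus continuity of the polynomial map $c_k \mapsto \gamma^{(t)}$ closes the argument. Your version buys three things the paper's does not make explicit: an exact finite-width identity showing the discrepancy is precisely the factor $c_k$ versus $c_k^2$; a clean treatment of the fact that the trained iterates themselves depend on $k$ (the paper's induction interleaves limits with products of $k$-dependent quantities without comment, and its remark that $Zx$ is asymptotically Gaussian is actually superfluous given the rank-one structure, since $A^{(0)}\frac{1}{\sqrt{k}}B^{(t)} = c_k\gamma^{(t)}$ converges to a deterministic limit); and an explicit statement that the convergence is almost sure over the draw of $A$, which the proposition leaves unspecified. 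The paper's term-by-term expansion, for its part, is organized so that essentially the same bookkeeping carries over to Proposition~\ref{prop:layerbothlayers}, where the outer layer also trains and the scalar-collapse is less immediate.
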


\begin{proof}[Proof of Proposition~\ref{prop:linear}]
    The gradient descent updates proceed as follows: 
    \begin{align*}
        B^{(t+1)} &= B^{(t)} +  \frac{\eta}{\sqrt{k}} {A^{(0)}}\tran \left(y - A^{(0)} \frac{1}{\sqrt{k}} B^{(t)} X \right) X\tran    ~.
    \end{align*}
    We provide a proof by induction.  We begin with the base case with $t = 1$.  The base case follows from the fact that $\lim\limits_{k \to \infty} \frac{1}{k} A^{(0)} {A^{(0)}}\tran = 1$ and $B^{(1)} = \frac{\eta}{\sqrt{k}} {A^{(0)}}\tran y X\tran$ and thus, 
    \begin{align*}
        \lim_{k \to \infty} {B^{(1)}}\tran B^{(1)} &= \lim_{k \to \infty} \frac{\eta^2}{k} X y\tran A^{(0)} {A^{(0)}}\tran y X\tran = \eta^2 X y\tran y X\tran~; \\
        \lim_{k \to \infty} \nabla f_1 \nabla f_1\tran &= \lim_{k \to \infty} {B^{(1)}}\tran {A^{(0)}}\tran \frac{1}{k} A^{(0)} B^{(1)} \\
        &= \lim_{k \to \infty} \eta^2  X y\tran \left(A^{(0)} \frac{1}{k} {A^{(0)}}^{T}\right) \left(A^{(0)} \frac{1}{k} {A^{(0)}}\tran \right)  y X\tran = \eta^2  X y\tran y X\tran~. 
    \end{align*}
    Thus, we now assume the inductive hypothesis that
    \begin{align*}
        \lim_{k \to \infty} {B^{(t)}}\tran B^{(t)} = \lim_{k \to \infty} \nabla {f^{(t)}} \nabla {f^{(t)}}\tran~
    \end{align*}    
    and analyze the case for timestep $t+1$.  We first have: 
    \begin{align*}
        {B^{(t+1)}}\tran B^{(t+1)} &
         =  \left[ {B^{(t)}} + \frac{\eta}{\sqrt{k}} {A^{(0)}}\tran \left(y - A^{(0)} \frac{1}{\sqrt{k}} B^{(t)} X \right) X\tran  \right]\tran \left[ {B^{(t)}} + \frac{\eta}{\sqrt{k}} {A^{(0)}}\tran \left(y - A^{(0)} \frac{1}{\sqrt{k}} B^{(t)} X \right) X\tran\right]\\
         &=  {B^{(t)}}\tran {B^{(t)}} + {B^{(t)}}\tran \frac{\eta}{\sqrt{k}} {A^{(0)}}\tran y X\tran - {B^{(t)}}\tran \frac{\eta}{k} {A^{(0)}}\tran {A^{(0)}} {B^{(t)}}X X\tran \\
         &\qquad +  \frac{\eta}{\sqrt{k}} X y\tran {A^{(0)}} B^{(t)} + \frac{\eta^2}{k} X y\tran {A^{(0)}} {A^{(0)}}\tran y X\tran - \frac{\eta^2}{k} X y\tran {A^{(0)}}  {A^{(0)}}\tran A^{(0)} \frac{1}{\sqrt{k}} B^{(t)} X X\tran \\
         &\qquad - \frac{\eta}{k} X X\tran {B^{(t)}}\tran  {A^{(0)}}\tran A^{(0)}  B^{(t)} - \frac{\eta^2}{k}  X X\tran {B^{(t)}}\tran \frac{1}{\sqrt{k}} {A^{(0)}}\tran A^{(0)} {A^{(0)}}\tran y X\tran  \\
         &\qquad + \frac{\eta^2}{k^2}  X X\tran {B^{(t)}}\tran  {A^{(0)}}\tran A^{(0)}  {A^{(0)}}\tran A^{(0)}  B^{(t)} X X\tran~. 
    \end{align*}
    To simplify notation, we let 
    \begin{align*}
        Z = \lim_{k \to \infty} A^{(0)} \frac{1}{\sqrt{k}} B^{(t)} \qquad ; \qquad M = \lim_{k \to \infty} {B^{(t)}}\tran B^{(t)}, 
    \end{align*}
    noting that for $x \in \mathbb{R}^{d}$, $Zx$ converges in distribution to a standard normal random variable by the central limit theorem.  Taking the limit as $k \to \infty$, applying the inductive hypothesis and the fact that $\lim\limits_{k \to \infty} \frac{1}{k} A^{(0)} {A^{(0)}}\tran = 1$, we reduce the above to 
    \begin{align*}
        \lim_{k \to \infty} {B^{(t+1)}}\tran B^{(t+1)} &= M + \eta Z\tran y X\tran - \eta M X X\tran \\
         &\qquad +  \eta X y\tran Z  + \eta^2 X y\tran  y X\tran - \eta^2  X y\tran Z X X\tran \\
         &\qquad - \eta X X\tran M - \eta^2  X X\tran Z\tran y X\tran + \eta^2  X X\tran M X X\tran~. 
    \end{align*}
    We will now show that $\lim\limits_{k \to \infty} \nabla f^{(t+1)} \nabla {f^{(t+1)}}\tran$ is of the same form.  Namely, we have 
    \begin{align*}
        \nabla f^{(t+1)} \nabla {f^{(t+1)}}\tran &= {B^{(t+1)}}\tran {A^{(0)}}\tran \frac{1}{k} A^{(0)} B^{(t+1)} \\
        &= {B^{(t)}}\tran {A^{(0)}}\tran \frac{1}{k} A^{(0)} {B^{(t)}} + {B^{(t)}}\tran {A^{(0)}}\tran \frac{1}{k} A^{(0)} \frac{\eta}{\sqrt{k}} {A^{(0)}}\tran y X\tran \\
        &\qquad - {B^{(t)}}\tran {A^{(0)}}\tran \frac{1}{k} A^{(0)} \frac{\eta}{k} {A^{(0)}}\tran {A^{(0)}} {B^{(t)}}X X\tran \\
         &\qquad +  \frac{\eta}{\sqrt{k}} X y\tran {A^{(0)}}  {A^{(0)}}\tran \frac{1}{k} A^{(0)} B^{(t)} + \frac{\eta^2}{k} X y\tran {A^{(0)}}  {A^{(0)}}\tran \frac{1}{k} A^{(0)} {A^{(0)}}\tran y X\tran \\
         &\qquad - \frac{\eta^2}{k} X y\tran {A^{(0)}}  {A^{(0)}}\tran \frac{1}{k} A^{(0)} {A^{(0)}}\tran A^{(0)} \frac{1}{\sqrt{k}} B^{(t)} X X\tran \\
         &\qquad - \frac{\eta}{\sqrt{k}} X X\tran {B^{(t)}}\tran  {A^{(0)}}\tran A^{(0)}  {A^{(0)}}\tran \frac{1}{k} A^{(0)} B^{(t)} \\
         &\qquad - \frac{\eta^2}{k}  X X\tran {B^{(t)}}\tran \frac{1}{\sqrt{k}} {A^{(0)}}\tran A^{(0)}  {A^{(0)}}\tran \frac{1}{k} A^{(0)} {A^{(0)}}\tran y X\tran  \\
         &\qquad + \frac{\eta^2}{k^2}  X X\tran {B^{(t)}}\tran  {A^{(0)}}\tran A^{(0)}   {A^{(0)}}\tran \frac{1}{k} A^{(0)} {A^{(0)}}\tran A^{(0)}  B^{(t)} X X\tran~. 
    \end{align*}
Now taking the limit as $k \to \infty$, we reduce the above to
\begin{align*}
    \lim_{k \to \infty} \nabla f^{(t+1)} \nabla {f^{(t+1)}}\tran &= M + \eta Z\tran y X\tran - \eta M XX\tran \\
    &\qquad + \eta X y\tran Z + \eta^2 X y\tran y X\tran - \eta^2 X y\tran Z X X\tran \\
    &\qquad - \eta XX\tran M -\eta^2 XX\tran Z\tran y X\tran +  \eta^2 XX\tran M XX\tran~.
\end{align*}
Hence, we conclude
\begin{align*}
    \lim_{k \to \infty} {B^{(t+1)}}\tran B^{(t+1)}  = \lim_{k \to \infty} \nabla f^{(t+1)} \nabla {f^{(t+1)}}\tran, 
\end{align*}
which completes the proof by induction. 
\end{proof}

In the following proposition, we extend the previous analysis to the case of two layer linear neural networks where both layers are trained for two steps of gradient descent.     

\begin{prop}
\label{prop:layerbothlayers}
    Let $f: \mathbb{R}^{d} \to \mathbb{R}$ denote a two layer neural network of the form 
    $$f(x) = A \frac{1}{\sqrt{k}} Bx ; $$
    where $A \in \mathbb{R}^{1 \times k}, B \in \mathbb{R}^{k \times d}$.  Let $A^{(t)}, B^{(t)}$ and $f^{(t)}$ denote updated weights after $t$ steps of gradient descent on the dataset $(X, y) \in \mathbb{R}^{d \times n} \times \mathbb{R}^{1 \times n}$ with constant learning rate $\eta > 0$. If $\{A_i^{(0)}\}_{i=1}^{k}$ are i.i.d. random variables $\mathbb{E}[{A_i^{(0)}}^2] = 1$ and $B^{(0)} = \mathbf{0}$,     
    \begin{align*}
        \lim_{k \to \infty} {B^{(2)}}\tran B^{(2)} = \lim_{k \to \infty} \nabla {f^{(2)}} \nabla {f^{(2)}}\tran ~; 
    \end{align*}
    where $\nabla {f^{(2)}}$ is the gradient of ${f^{(2)}}$.
\end{prop}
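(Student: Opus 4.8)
The plan is to run the two gradient steps explicitly and exploit the rigid low-rank structure forced by $B^{(0)} = \mathbf{0}$. Writing $r^{(t)} := y - A^{(t)} \frac{1}{\sqrt{k}} B^{(t)} X \in \mathbb{R}^{1 \times n}$ for the residual row vector, the gradient descent updates are $A^{(t+1)} = A^{(t)} + \frac{\eta}{\sqrt{k}} r^{(t)} X\tran (B^{(t)})\tran$ and $B^{(t+1)} = B^{(t)} + \frac{\eta}{\sqrt{k}} (A^{(t)})\tran r^{(t)} X\tran$. Since $B^{(0)} = \mathbf{0}$, the first step gives $r^{(0)} = y$, a vanishing $A$-update so that $A^{(1)} = A^{(0)}$, and $B^{(1)} = \frac{\eta}{\sqrt{k}} (A^{(0)})\tran y X\tran$ --- exactly the rank-one first iterate appearing in the base case of Proposition~\ref{prop:linear}.

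For the second step, one first computes $r^{(1)} = y - \frac{\eta}{k} \|A^{(0)}\|^2\, y X\tran X$. The crucial structural observation is that, because $B^{(1)}$ carries $(A^{(0)})\tran$ as its left factor, the second-step correction to $A$ is again parallel to $A^{(0)}$: a short calculation yields $A^{(2)} = (1 + \varepsilon_k) A^{(0)}$ with scalar $\varepsilon_k = \frac{\eta^2}{k}\, r^{(1)} X\tran X y\tran = O(1/k)$. Simultaneously $B^{(2)} = B^{(1)} + \frac{\eta}{\sqrt{k}} (A^{(0)})\tran r^{(1)} X\tran = \frac{\eta}{\sqrt{k}} (A^{(0)})\tran w^{(1)} X\tran$, where $w^{(1)} := y + r^{(1)} \in \mathbb{R}^{1 \times n}$.

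Both sides of the claimed identity then collapse onto the same rank-one $d \times d$ matrix $X (w^{(1)})\tran w^{(1)} X\tran$, up to scalar prefactors built from $\frac{1}{k}\|A^{(0)}\|^2$ and $\varepsilon_k$. Directly, $(B^{(2)})\tran B^{(2)} = \eta^2 \frac{\|A^{(0)}\|^2}{k}\, X (w^{(1)})\tran w^{(1)} X\tran$. On the gradient side, $f^{(2)}$ is linear with $\nabla f^{(2)} = \frac{1}{\sqrt{k}} (A^{(2)} B^{(2)})\tran$ and $A^{(2)} B^{(2)} = (1 + \varepsilon_k) \frac{\eta}{\sqrt{k}} \|A^{(0)}\|^2\, w^{(1)} X\tran$, so $\nabla f^{(2)} (\nabla f^{(2)})\tran = (1 + \varepsilon_k)^2 \eta^2 \big(\frac{\|A^{(0)}\|^2}{k}\big)^2 X (w^{(1)})\tran w^{(1)} X\tran$. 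Letting $k \to \infty$, the strong law of large numbers gives $\frac{1}{k}\|A^{(0)}\|^2 \to 1$ almost surely, hence $\varepsilon_k \to 0$ and $w^{(1)} \to \bar{w} := y(2 I_n - \eta X\tran X)$; both prefactors tend to $\eta^2$, and both sides converge to $\eta^2 X \bar{w}\tran \bar{w} X\tran$, which is the assertion.

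As for difficulty, there is no real obstacle, and two steps is precisely the regime where this works cleanly: after the first update $B$ is rank one with left factor $(A^{(0)})\tran$, and the second $A$-update merely rescales $A^{(0)}$, which forces both $(B^{(2)})\tran B^{(2)}$ and $\nabla f^{(2)} (\nabla f^{(2)})\tran$ to be scalar multiples of the same rank-one matrix. The only points demanding care are the mode of convergence (the limit is over the random draw of $A^{(0)}$ and holds almost surely via the SLLN) and the remark that --- unlike Proposition~\ref{prop:linear} for general $t$ --- no central-limit argument is required here, since $A$ is essentially frozen and no mean-zero hypothesis on $A^{(0)}$ is used. The calculation would genuinely break at three or more steps, where residual reweighting makes the second-layer update no longer collinear with $(A^{(0)})\tran$ and one would need the CLT machinery of Proposition~\ref{prop:linear}.
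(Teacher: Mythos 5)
Your proof is correct and follows essentially the same route as the paper's: unroll the two gradient steps explicitly, observe that $A^{(2)}$ is a $(1+O(1/k))$ rescaling of $A^{(0)}$ while $B^{(2)}$ retains ${A^{(0)}}\tran$ as its left factor, and pass to the limit via the law of large numbers. Your factorization $B^{(2)}=\frac{\eta}{\sqrt{k}}{A^{(0)}}\tran w^{(1)} X\tran$ organizes the algebra more cleanly, and your closed form $\eta^2 X\bar w\tran \bar w X\tran$ with $\bar w = y(2I_n-\eta X\tran X)$ is in fact the correct limit: the paper's intermediate line $B^{(2)}=\frac{2\eta}{\sqrt{k}}B^{(1)}-\cdots$ should read $2B^{(1)}-\cdots$, and its stated limit $4\eta^2 M+\eta^2 XX\tran M XX\tran$ consequently misses the cross terms $-2\eta M XX\tran-2\eta XX\tran M$ that your $\bar w$ correctly retains. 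Since the proposition asserts only the equality of the two limits (not their value), both arguments establish the claim, but yours does so without the bookkeeping slip; your closing remarks on the mode of convergence and on why the argument is confined to two steps are also accurate.
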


\begin{proof}
    We prove the statement directly.  The gradient descent updates proceed as follows: 
    \begin{align*}
        A^{(t+1)} &= A^{(t)} + \frac{\eta}{\sqrt{k}} \left(y - A^{(t)} \frac{1}{\sqrt{k}} B^{(t)}X\right)X\tran {B^{(t)}}\tran~,  \\
        B^{(t+1)} &= B^{(t)} + \frac{\eta}{\sqrt{k}} {A^{(t)}}\tran \left(y - A^{(t)} \frac{1}{\sqrt{k}} B^{(t)}X\right)X\tran~.
    \end{align*}
    Thus, after 1 step of gradient descent, we have
    \begin{align*}
        A^{(1)} = A^{(0)} \qquad ; \qquad B^{(1)} = \frac{\eta}{\sqrt{k}} {A^{(0)}}\tran y X\tran . 
    \end{align*}
    From the proof of Proposition~\ref{prop:linear}, we have that
    \begin{align*}
        \lim_{k \to \infty}{B^{(1)}}\tran B^{(1)}  = \lim_{k \to \infty} {B^{(1)}}\tran \frac{1}{k} {A^{(0)}}\tran A^{(0)} B^{(1)}~, 
    \end{align*}
    and so, we define the matrix $M$ to be: 
    \begin{align*}
        M := \lim_{k \to \infty}{B^{(1)}}\tran B^{(1)}.
    \end{align*}
    Next, after 2 steps of gradient descent, we have: 
    \begin{align*}
        A^{(2)} &= A^{(1)} + \frac{\eta}{\sqrt{k}} \left(y - A^{(1)} \frac{1}{\sqrt{k}} B^{(1)}X \right)  X\tran {B^{(1)}}\tran \\
        &= A^{(0)} + \frac{\eta}{\sqrt{k}} y X\tran {B^{(1)}}\tran - \frac{\eta}{k} A^{(0)}  B^{(1)}X   X\tran {B^{(1)}}\tran \\
        & = A^{(0)} +  \frac{\eta^2}{k} y X\tran  X y\tran {A^{(0)}} -  \frac{\eta^3}{k^2} A^{(0)} {A^{(0)}}\tran y X\tran X X\tran  X y\tran {A^{(0)}}; \\
    \end{align*}
    and
    \begin{align*}
        B^{(2)} &= B^{(1)}  + \frac{\eta}{\sqrt{k}} {A^{(1)}}\tran \left(y - A^{(1)} \frac{1}{\sqrt{k}} B^{(1)}X\right)X\tran \\
        &= \frac{2\eta}{\sqrt{k}} {A^{(0)}}\tran y X\tran - \frac{\eta}{k} {A^{(0)}}\tran A^{(0)} B^{(1)} X X\tran \\
        &= \frac{2\eta}{\sqrt{k}} B^{(1)} - \frac{\eta}{k} {A^{(0)}}\tran A^{(0)} B^{(1)} X X\tran~.
    \end{align*}
    Thus, we simplify ${B^{(2)}}\tran B^{(2)}$ as follows:
    \begin{align*}
        {B^{(2)}}\tran B^{(2)} &= \frac{4\eta^2}{k} {B^{(1)}}\tran {B^{(1)}} - \frac{2\eta^2}{k\sqrt{k}} {B^{(1)}}\tran {A^{(0)}}\tran A^{(0)} B^{(1)} X X\tran \\
        &\qquad - \frac{2\eta^2}{k\sqrt{k}} X X\tran {B^{(1)}}\tran {A^{(0)}}\tran A^{(0)} {B^{(1)}} + \frac{\eta^2}{k^2}  X X\tran {B^{(1)}}\tran {A^{(0)}}\tran A^{(0)} {A^{(0)}}\tran A^{(0)} B^{(1)} X X\tran~.
    \end{align*}
    Taking the limit as $k \to \infty$, we simplify the above expression to
    \begin{align*}
        \lim_{k \to \infty} {B^{(2)}}\tran B^{(2)} = 4 \eta^2 M + \eta^2 XX\tran M XX\tran~.
    \end{align*}
    
    A key observation is that as $k \to \infty$, the $O\left( \frac{1}{k} \right)$ and $O\left( \frac{1}{k^2} \right)$ terms in $A^{(2)}$ will vanish in the evaluation of $\lim\limits_{k \to \infty} \nabla {f^{(2)}} \nabla {f^{(2)}}\tran$ since the gradient also contains an extra $\frac{1}{\sqrt{k}}$ term from $f$.  Hence only the $O(1)$ terms given by $A^{(0)}$ will remain in the evaluation of $\lim\limits_{k \to \infty} \nabla {f^{(2)}} \nabla {f^{(2)}}\tran$. Using this observation, we have:
    \begin{align*}
        \lim\limits_{k \to \infty} \nabla f^{(2)} \nabla {f^{(2)}}\tran &= \lim_{k \to \infty} {B^{(2)}}\tran \frac{1}{k} {A^{(2)}}\tran A^{(2)} B^{(2)} = \lim_{k \to \infty} {B^{(2)}}\tran \frac{1}{k} {A^{(0)}}\tran A^{(0)} B^{(2)},  
    \end{align*}
    which by the expansion of ${B^{(2)}}\tran B^{(2)}$ and the proof of Proposition~\ref{prop:linear}, is equivalent to $4 \eta^2 M + \eta^2 XX\tran M XX\tran$.  
\end{proof}

The above results demonstrate that the ansatz holds for one-hidden-layer neural networks trained in isolation.  We now prove the ansatz in the more general setting of deep, nonlinear fully connected networks by ensembling, or averaging over infinitely many networks.  We present the proof of Theorem~\ref{thm:nonlinear_resampled} below.

\begin{proof}
For a matrix $A_\alpha \in \Real^{c \times d}$, we denote its $p$-th row by $A_{\alpha,p}$. For a vector $v \in \Real^d$, we denote its $i$-th element by $v(i)$. To simplify notation, we drop the subscript $t$ if it is irrelevant (e.g., fixed) in an expression.  We consider the right hand side of the desired equation. The gradient with respect to the input is given by
\begin{align*}
    \nabla_x f(x) =  \frac{c_\phi^L}{\prod_{\ell=1}^{L} k_{\ell}} &\sum_{k'_{L}} \sum_{k'_{L-1}} \ldots  \sum_{k'_{1}} a_{k'_{L}} \phi'({W_{L,k'_{L}}}\tran \phi_{L-1}(z))\\
    &\cdot W_{L,k'_{L}}(k'_{L-1}) \phi'({W_{L-1,k'_{L-1}}}\tran \phi_{L-2}(x)) \cdots W_{3,k'_{3}}(k'_{2}) \cdot \phi'({W_{2,k'_{2}}}\tran \phi_{1}(x)) \\
    &\cdot W_{2,k'_{2}}(k'_1) \phi'({W_{1,k'_1}}\tran x) \cdot W_{1,k'_1} ~.
\end{align*}
Then,
\begin{align*}
    \nabla_x f(x) {\nabla_x f(x)}\tran &=  \frac{c_\phi^L}{\prod_{\ell=1}^{L} k_{\ell}} \\
    &\cdot \sum_{k'_{L},k''_{L}} \sum_{k'_{L-1},k''_{L-1}} \ldots \sum_{k'_{1},k''_{1}} \phi'\round{{W_{L,k'_{L}}}\tran \phi_{L-1}(x)} \cdots \phi'\round{{W_{2,k'_{2}}}\tran \phi(W_1 x)} \phi'(({W_{1,k'_1})}\tran x)\\
    &\cdot \phi'\round{{W_{L,k''_{L}}}\tran \phi_{L-1}(x)} \cdots \phi'\round{{W_{2,k''_{2}}}\tran \phi(W_1 x)} \phi'(({W_{1,k''_1}})\tran x)\\
    & \cdot a_{k'_{L}} W_{L,k'_{L}}(k'_{L-1}) \cdots W_{3,k'_{3}}(k'_{2}) W_{2,k'_2}(k'_1) \\
    & \cdot a_{k''_{L}} W_{L,k''_{L}}(k''_{L-1}) \cdots W_{3,k''_{3}}(k''_{2}) W_{2,k''_2}(k''_1)\\
    & \cdot W_{1,k'_1} {W_{1,k''_1}}\tran~.
 \end{align*}
 By the gradient independence ansatz, we can generate new samples $\wt{W}_L, \ldots, \wt{W}_2$,
 \begin{align*}
     \lim_{k_2,\ldots,k_L \to \infty} \mathbb{E}_a \left[ \nabla_x f(x) {\nabla_x f(x)}\tran \right] &=  \lim_{k_2,\ldots,k_L \to \infty} \mathbb{E}_a \frac{c_\phi^L}{\prod_{\ell=1}^{L} k_{\ell}} \sum_{k'_{L},k''_{L}} \sum_{k'_{L-1},k''_{L-1}} \ldots \sum_{k'_{1},k''_{1}} \\
    &\cdot \phi'\round{{W_{L,k'_{L}}}\tran \phi_{L-1}(x)} \cdots \phi'\round{{W_{2,k'_{2}}}\tran \phi(W_1 x)} \phi'(({W_{1,k'_1})}\tran x)\\
    &\cdot \phi'\round{{W_{L,k''_{L}}}\tran \phi_{L-1}(x)} \cdots \phi'\round{{W_{2,k''_{2}}}\tran \phi(W_1 x)} \phi'(({W_{1,k''_1}})\tran x)\\
    & \cdot a_{k'_{L}} \wt{W}_{L,k'_{L}}(k'_{L-1}) \cdots \wt{W}_{3,k'_{3}}(k'_{2}) \wt{W}_{2,k'_2}(k'_1) \\
    & \cdot a_{k''_{L}} \wt{W}_{L,k''_{L}}(k''_{L-1}) \cdots \wt{W}_{3,k''_{3}}(k''_{2}) \wt{W}_{2,k''_2}(k''_1)\\
    & \cdot W_{1,k'_1} {W_{1,k''_1}}\tran~.
\end{align*}
Pulling factors outside of the limit,
\begin{align*}
    \lim_{k_2,\ldots,k_L \to \infty} \mathbb{E}_a \left[ \nabla_x f(x) {\nabla_x f(x)}\tran \right]  &=  \lim_{k_{2},\ldots,k_{L-1} \to \infty} \frac{c_\phi^L}{\prod_{\ell=1}^{L-1} k_{\ell}} \\
    & \lim_{k_L \to \infty} \frac{1}{k_L} \sum_{k'_{L}} \round{\phi'\round{{W_{L,k'_{L}}}\tran \phi_{L-1}(x)}}^2 \\
    &\sum_{k'_{L-1},k''_{L-1}}  \wt{W}_{L,k'_{L}}(k'_{L-1})  \wt{W}_{L,k'_{L}}(k'_{L-1}) \\
    &\cdot \sum_{k'_{L-2},k''_{L-2}} \ldots \sum_{k'_{1},k''_{1}} \phi'\round{{W_{L-1,k'_{L-1}}}\tran \phi_{L-2}(x)} \cdots \phi'(({W_{1,k'_1})}\tran x)\\
    &\cdot \phi'\round{{W_{L-1,k''_{L-1}}}\tran \phi_{L-2}(x)} \cdots \phi'(({W_{1,k''_1}})\tran x)\\
    & \cdot \wt{W}_{L-1,k'_{L-1}}(k'_{L-2}) \cdots \wt{W}_{3,k'_{3}}(k'_{2}) \wt{W}_{2,k'_2}(k'_1) \\
    & \cdot \wt{W}_{L-1,k''_{L-1}}(k''_{L-2}) \cdots \wt{W}_{3,k''_{3}}(k''_{2}) \wt{W}_{2,k''_2}(k''_1)\\
    & \cdot W_{1,k'_1} {W_{1,k''_1}}\tran~.
 \end{align*}
 Note that by re-sampling, $\round{\phi'\round{{W_{L,k'_{L}}}\tran \phi_{L-1}(x)}}^2$ is independent of the remaining terms, and so we can apply the law of large numbers as $k_L \to \infty$ and split the expectation as follows.
\begin{align*}
 \lim_{k_2,\ldots,k_L \to \infty} \mathbb{E}_a \left[\nabla_x f(x) {\nabla_x f(x)}\tran \right]
    &=  \lim_{k_{2},\ldots,k_{L-1} \to \infty} \frac{c_\phi^L}{\prod_{\ell=1}^{L-1} k_{\ell}} \\
    & \mathbb{E}_{k'_{L}} \sbrac{\round{\phi'\round{{W_{L,k'_{L}}}\tran \phi_{L-1}(x)}}^2} \\
    &\mathbb{E}_{k'_{L}} \Bigg[ \sum_{k'_{L-1},k''_{L-1}}  \wt{W}_{L,k'_{L}}(k'_{L-1})  \wt{W}_{L,k'_{L}}(k'_{L-1}) \\
    &\cdot \sum_{k'_{L-2},k''_{L-2}} \ldots \sum_{k'_{1},k''_{1}} \phi'\round{{W_{L-1,k'_{L-1}}}\tran \phi_{L-2}(x)} \cdots \phi'(({W_{1,k'_1})}\tran x)\\
    &\cdot \phi'\round{{W_{L-1,k''_{L-1}}}\tran \phi_{L-2}(x)} \cdots  \phi'(({W_{1,k''_1}})\tran x)\\
    & \cdot \wt{W}_{L-1,k'_{L-1}}(k'_{L-2}) \cdots \wt{W}_{3,k'_{3}}(k'_{2}) \wt{W}_{2,k'_2}(k'_1) \\
    & \cdot \wt{W}_{L-1,k''_{L-1}}(k''_{L-2}) \cdots \wt{W}_{3,k''_{3}}(k''_{2}) \wt{W}_{2,k''_2}(k''_1)\\
    & \cdot W_{1,k'_1} {W_{1,k''_1}}\tran \Bigg]~.
\end{align*}
Evaluating the expectations above, we conclude:
\begin{align*}
     \lim_{k_2,\ldots,k_L \to \infty} \mathbb{E}_a \left[ \nabla_x f(x) {\nabla_x f(x)}\tran \right] 
    &=  \lim_{k_{2},\ldots,k_{L-1} \to \infty} \frac{c_\phi^{L-1}}{\prod_{\ell=1}^{L-1} k_{\ell}} \\
    & \cdot \sum_{k'_{L-1}} \sum_{k'_{L-2},k''_{L-2}} \ldots \sum_{k'_{1},k''_{1}} \phi'\round{{W_{L-1,k'_{L-1}}}\tran \phi_{L-2}(x)} \cdots  \phi'(({W_{1,k'_1})}\tran x)\\
    &\cdot \phi'\round{{W_{L-1,k'_{L-1}}}\tran \phi_{L-2}(x)} \cdots  \phi'(({W_{1,k''_1}})\tran x)\\
    & \cdot \wt{W}_{L-1,k'_{L-1}}(k'_{L-2}) \cdots \wt{W}_{3,k'_{3}}(k'_{2}) \wt{W}_{2,k'_2}(k'_1) \\
    & \cdot \wt{W}_{L-1,k''_{L-1}}(k''_{L-2}) \cdots \wt{W}_{3,k''_{3}}(k''_{2}) \wt{W}_{2,k''_2}(k''_1)\\
    & \cdot W_{1,k'_1} {W_{1,k''_1}}\tran ~.
\end{align*}
 Recursively applying this procedure yields
\begin{align*}
     \lim_{k_2,\ldots,k_L \to \infty} \mathbb{E}_a \left[ \nabla_x f(x) {\nabla_x f(x)}\tran \right]
    &=  \frac{c_\phi}{k_{1}} \sum_{k'_{1}} \phi'(({W_{1,k'_1})}\tran x)^2 W_{1,k'_1} {W_{1,k'_1}}\tran~.
\end{align*}
Taking the expectation with respect to $x$,
\begin{align*}
    \mathbb{E}_{x} \sbrac{\lim_{k_2,\ldots,k_L \to \infty} \mathbb{E}_{a} \left[ \nabla_x f(x) {\nabla_x f(x)}\tran \right] } 
    &= \frac{1}{k_1} \sum_{k'_1} W_{1,k'_1} {W_{1,k'_1}}\tran~.
\end{align*}
\end{proof}

\section{Methods} 
\label{appendix: Methods}

Below, we provide a description of all datasets, models, and training methodology considered in this work.    

\paragraph{Validating the Deep Neural Feature Ansatz} 

All neural networks in Fig.~2A have $5$ hidden layers with $1024$ hidden units per layer and ReLU activation.  We use minibatch gradient descent with a learning rate of $0.1$ and batch size $128$ for $500$ epochs and initialize the first layer weights according to a Gaussian distribution with mean $0$ and standard deviation $10^{-4}$.  All networks used in Fig.~2B and Supplementary Fig.~\ref{appendix fig: Ansatz CelebA SVHN} have $5$ hidden layers with $64$ hidden units per layer and ReLU activation.  We use minibatch gradient descent with a learning rate of $0.2$ and batch size of $128$ for $500$ epochs and initialize the first layer weights according to a Gaussian distribution with mean $0$ and standard deviation of $10^{-6}$. 

\paragraph{Spurious features.} For experiment in Fig.~\ref{fig: Spurious features}A, we constructed a training set of size $50000$ concatenated CIFAR-10 and MNIST digits, and a corresponding test set of $10000$ test images. The training and test data were generated from data loaders provided by PyTorch.  We used 20$\%$ of the training samples were used for validation.  For Fig.~\ref{fig: Spurious features}A and B, we trained a five-hidden-layer fully connected ReLU network with $64$ hidden units per layer using SGD with a learning rate of $0.2$ and a mini-batch size of $128$.  We initialized first layer weights from a Gaussian with mean zero and standard deviation $10^{-4}$.  For Fig.~\ref{fig: Spurious features}C, we trained a two-hidden-layer fully connected ReLU network with $5$ hidden unhits per layer using SGD for $500$ epochs with a learning rate of $0.1$ and a mini-batch size of $128$.  For all experiments, we train using the mean squared error (MSE) with one-hot labels for each of the classes.

\paragraph{Grokking.} The total number of training and validation samples used is $553$ with $500$ examples of airplanes and $53$ examples of trucks.  We use $800$ examples per class from the PyTorch test set as test data.  We set a small stars of pixels ($8$ pixels tall, $7$ pixels wide) in the upper left corner to yellow (all $1$s in the green and red channel) if the image is a truck and all $0$s if the image is a plane.  We use $80\%$ of the $553$ samples for training and $20\%$ for validation.   We train a two hidden layer fully connected ReLU network using Adam with a learning rate of $10^{-4}$ and batch size equal to dataset size.  We initialize the weights of the first layer of the ReLU network according to a normal distribution with standard deviation $5 \times 10^{-3}$.  We train RFMs updating only the diagonals of the feature matrix for three iterations with ridge regularization of $2.5 \times 10^{-3}$ and using the Laplace kernel as the base kernel function with a bandwidth of $10$.  We used ridge regularization to slow down training of RFMs to visualize how the feature matrix changes through iteration.  We note that without regularization, the RFM gets $100\%$ test accuracy within 1 iteration.

\paragraph{Lottery Tickets.}  For all binary classification tasks on CelebA, we normalize all images to be on the unit sphere.  We train 2-hidden layer ReLU networks with 1024 hidden units per layer using stochastic gradient descent (SGD) for $500$ epochs with a learning rate of $0.1$ and a mini-batch size of $128$.  We train using the mean squared error (MSE) with one-hot labels for each of the classes.  Accuracy is reported as the argmax across classes.  We split available training data into $80\%$ training and $20\%$ validation for hyper-parameter selection.  We report accuracy on a held out test set provided by PyTorch~\cite{PyTorch}.  In addition, since there can be large class imbalances in this data, we ensure that the training set and test set are balanced by limiting the number of majority class samples to the same number of minority class samples.  Given that these are higher resolution images, we limit the total number of training and validation examples per experiment to $50000$ ($25000$ per class).  When re-training networks after masking to the top $2\%$ of pixels with highest intensity in the diagonal of the first layer NFM, we re-initialize networks of the same architecture using the default PyTorch initialization scheme.   

\paragraph{RFMs trained on CelebA.} For the CelebA tasks in Appendix Fig.~\ref{appendix fig: NN and RFM features CelebA}, we train RFMs for 1 iteration, use a ridge regularization term of $10^{-3}$, and average the gradient outer product of at most $20000$ examples.  All RFMs use Laplace kernels as the base kernel and use a bandwidth parameter of $L = 10$.  We solve kernel ridge regression exactly via the solve function in numpy~\cite{numpy2}.  We use the same training, validation, and test splits considered in the lottery ticket experiments.

\paragraph{RFMs, neural networks, NTK, and Laplace kernels on SVHN.} In Appendix Fig.~\ref{appendix fig: NN and RFM features SVHN Low Rank}A, we train 2-hidden layer ReLU networks with 1024 hidden units per layer using stochastic gradient descent (SGD) for $500$ epochs with a learning rate of $0.05$ and a mini-batch size of $100$.  We train using the mean squared error (MSE) with one-hot labels for each of the classes.  Accuracy is reported as the argmax across classes.  We train RFMs for 5 iterations and average the gradient outer product of at most $20000$ examples.  We also center gradients during computation of RFMs by subtracting the mean of the gradients before computing the average gradient outer product.  RFMs and Laplace kernels used all have a bandwidth parameter of $10$.  We compare with the NTK of a 2-hidden layer ReLU network.  For all kernels, we solve kernel ridge regression with ridge term of $10^{-3}$ via the solve function in numpy~\cite{numpy2}.  The test accuracy for RFMs in Fig.~\ref{appendix fig: NN and RFM features SVHN Low Rank}A is given by training a 1-hidden layer NTK with ridge regularization of $10^{-2}$ on the feature matrix selected from the last iteration of training, which resulted in the best validation accuracy. 

\paragraph{RFMs, neural networks, NTK, and Laplace kernels on low rank polynomial regression.} In Appendix Fig.~\ref{appendix fig: NN and RFM features SVHN Low Rank}B and C, we consider the low rank polynomials from~\cite{PreetumLimitations} and~\cite{DLSRepresentationReLU}.  We use $1000$ examples for training and $10000$ samples for testing. Following the setup of~\cite{PreetumLimitations}, we sample training inputs from a Rademacher distribution in $30$ dimensions and add random noise (see Appendix Fig.~\ref{appendix fig: NN and RFM features SVHN Low Rank}B).  The labels are generated by the product of the first two coordinates of the inputs without noise.  We train a 1 hidden layer neural network for $1000$ epochs using full batch gradient descent with a learning rate of .1 and initialize the first layer with standard deviation $10^{-3}$ so as to mitigate the effect of the initialization. We train RFMs with no ridge term and set the base kernel function as the Laplace kernel with bandwidth 10.  We note the neural network was able to interpolate the training data and achieved a training $R^2$ of $1$.

For the second low rank experiment in Appendix Fig.~\ref{appendix fig: NN and RFM features SVHN Low Rank}C, we sample inputs, $x$,  according to a $10$ dimensional isotropic Gaussian distribution and sample a fixed vector, $u$,  on the unit sphere in $10$ dimensions.  The targets are given by $g(u^T x)$ where $g(z) = \text{He}_2(z) + \text{He}_4(z)$ where $\text{He}_2, \text{He}_4$ are the second and fourth probabilist's Hermite polynomials.  We train a 1 hidden layer neural network using full batch Adam~\cite{Adam} with a learning rate of $10^{-2}$ and use the default PyTorch initialization.  We train RFMs with no ridge term and set the base kernel function as the Laplace kernel with bandwidth 10.  We note the neural network was able to nearly interpolate the training data within $1000$ epochs and achieved a training $R^2$ of $0.971$.  

\paragraph{121 datasets from~\cite{FernandezDelgado}.} We first describe the experiments for $120$ of the $121$ datasets with fewer than $130000$ examples since we used EigenPro \cite{EigenProGPU} to train kernels on the largest dataset.  For all kernel methods (RFMs, Laplace kernel and NTK), we grid search over ridge regularization from the set $\{10, 1, .1, .01, .001\}$.  We grid search over 5 iterations for RFMs and used a bandwidth of $10$ for all Laplace kernels.  For NTK ridge regression experiments, we grid search over NTKs corresponding to ReLU networks with between $1$ and $5$ hidden layers.  For the dataset with $130000$ samples, we use EigenPro to train all kernel methods and RFMs.  We run EigenPro for at most 50 iterations and select the  iteration with best validation accuracy for reporting test accuracy.  For small datasets (i.e., those with fewer than $5000$ samples), we grid search over updating just the diagonals of $M$ and updating the entire matrix $M$.  Lastly, for all kernel methods and RFMs, we grid search over normalizing the data to the unit sphere.  We note that there is one dataset (balance-scale), which had a data point with norm $0$, and so we did not grid search over normalization for this dataset. 

\paragraph{Tabular data benchmark from~\cite{TabularDataBenchmark}.} We used the repository from~\cite{TabularDataBenchmark} at \url{https://github.com/LeoGrin/tabular-benchmark}, modifying the code as needed to incorporate our method. On all datasets, we grid search over $5$ iterations of RFM with the Laplace kernel, solving  kernel regression in closed form at all steps. This benchmark consists of $20$ medium regression datasets (without categorical variables), $3$ large regression datasets (without categorical variables), $15$ medium classification datasets (without categorical variables), $4$ large classification datasets (without categorical variables), $13$ medium classification datasets (with categorical variables), $5$ large regression datasets (with categorical variables), $7$ medium classification datasets (with categorical variables), and $2$ large classification datasets (with categorical variables). Following the terminology from~\cite{TabularDataBenchmark}, ``medium'' refers to datasets with at most $10000$ training examples and ``large'' refers to those with more than $10000$ training examples.  In general, we grid-searched over ridge regularization parameters in $\{10^{-4},10^{-3},10^{-2},10^{-1},1\}$ with fixed bandwidth $L = 10$. For regression, we centered the labels and scaled their variance to $1$. On large regression datasets, we also optimized for bandwidths over $\{1,5,10,15,20\}$. We searched over two target transformations - the log transform ($\wh{y} = |y| \log(1 + |y|)$) and {\tt sklearn.preprocessing.QuantileTransformer}. In both cases, we inverted the transform before testing. We also searched over data transformations - {\tt sklearn.preprocessing.StandardScaler} and {\tt sklearn.preprocessing.QuantileTransformer}. We also optimized for the use of centering/not centering the gradients in our computation, and extracting just the diagonal of the feature matrix.  For non-kernel methods, we compare to the metrics reported in \cite{TabularDataBenchmark}. For classification, we report the average accuracy across the random iterations in each sweep (including random train/val/test splits). For regression, the average $R^2$ is reported. The reported test score is the average performance of the model with the highest average validation performance.

\vspace{3mm} 

\noindent We next provide a description of all metrics considered in the tabular benchmarks.

\paragraph{Friedman Rank.}  To compute Friedman rank, we rank classifiers in order of performance (e.g. the top performer gets rank $1$) for each dataset and then average the ranks.  In the original results of~\cite{FernandezDelgado}, certain classifiers were missing performance values.  To compute the Friedman rank, the authors of~\cite{FernandezDelgado} impute such missing entries via the average classifier performance for this data.  We provide code for computing the Friedman rank that replicates the ranks provided in the original work of~\cite{FernandezDelgado}.

\paragraph{Average Accuracy.} Average accuracy is just the average over all available accuracies across datasets.  In this case, missing accuracies are not imputed for the average and are simply dropped. 

\paragraph{Percentage of Maximum Accuracy (PMA).}  An average over the percentage of the best classifier accuracy achieved by a given model across all datasets.  

\paragraph{P90/P95.}  An average over all datasets for which a classifier achieves within $90\%/95\%$ of the accuracy of the best model.  

\paragraph{Average Distance to Minimum (ADTM).} This metric normalizes for variance in the hardness of different datasets. Let $x_{ij}$ be the performance of method $j$ for dataset $i$, the ADTM for method $j$ is defined as $\textsf{ADTM}_j=\text{Avg}_i\round{\frac{x_{ij}-\min_j x_{ij}}{\max_j x_{ij}}}$. Note $\textsf{ADTM}\in[0,1]$, with $1$ indicating a method is the best among all methods in the collection, and $0$ indicating a method is the worst.

\section{Background on Kernel Ridge Regression}
\label{appendix: kernel ridge regression background}

We here provide a brief review of kernel ridge regression~\cite{KernelsBook}. Given a dataset $\{(x_i, y_i)\}_{i=1}^{n} \subset \mathbb{R}^{d} \times \mathbb{R}$ and a Hilbert space, $\Hilbert$, kernel ridge regression constructs an non-parametric estimator given by
\begin{align}
    \wh f_{n,\lambda}=\underset{f\in\Hilbert}{\rm argmin} \sum_{i=1}^n (f(x_i)-y_i)^2 + \lambda\norm{f}_\Hilbert^2 ;
\end{align}
where $\lambda \geq 0$ is referred to as the ridge regularization parameter.  Note this is an infinite dimensional optimization problem in a Reproducing Kernel Hilbert Space, $\Hilbert$, corresponding to a positive semi-definite kernel function $K$. By virtue of the Representer theorem~\cite{wahba1990spline}, this problem has a unique solution in the span of the data given by
\begin{align}
    \wh f_{n,\lambda}=\sum_{i=1}^n \wh\alpha_i K(x,x_i)\qquad\text{where}\quad \wh\alpha= y (K(X,X)+\lambda I_n)\inv  ;
\end{align}
where $K(X,X)_{ij} = K(x_i,x_j)$ and $y \in \mathbb{R}^{1 \times n}$.  Naively, this involves solving a $n\times n$ linear system, which can be typically solved in closed form for $n \leq 100,000$. For $n>100,000$, we apply the EigenPro solver~\cite{EigenProGPU} to approximately solve kernel regression via early-stopped, preconditioned-SGD that can run on the GPU. For $\lambda \rightarrow 0^+$, we recover the pseudo-inverse solution $\wh\alpha= y K(X,X)^\dagger $. For multi-class and multi-variate problems, $y_i$ are vector valued and we consider each class/target variable as a separate problem. 

\section{Kernel alignment and gradient outer product}
\label{appendix: Kernel Alignment}

To improve kernel selection for supervised learning, a line of research~\cite{cristianini2001kernel,cortes2012algorithms, sinha2016learning} considered selecting a kernel or a combination of kernels to maximize alignment with the following, \textit{ideal} kernel, function.

\begin{definition}
\label{def:ideal_kernel}
Suppose data $(x,f(x)) \in \Real^{d} \times \Real$ are generated by a target function $f(x)$. Then, the \textbf{ideal kernel} is $K^*(x,z) = f(x) f(z)$.  
\end{definition}

If one knows the target function $f$ beforehand, then the ideal feature map is $\psi(x) = f(x)$, as the predictor $\bm{1}^T \psi(x)$ will recover the target value exactly (assuming no label noise). Further, in the Bayesian setting, the ideal kernel averaged over the distribution of target functions will be optimal~\cite{JacotAlignment}.   We now showcase a benefit of the expected gradient outer product, $M$, by demonstrating that regression with a Mahalanobis kernel using $M$ will recover the ideal kernel when the target function is linear.

\begin{prop}
\label{prop:rfm_ideal_informal}
Let $x \in \mathbb{R}^{d}$ have density $\rho$, let $\beta \in \mathbb{R}^{d}$, and consider the linear model, i.e., $f(x) = \beta^T x$. For $z, z' \in \mathbb{R}$, let $K_M(z,z') = z^T M z'$ with $M = \mathbb{E}_{x}[\nabla f(x) \nabla f(x)^T]$.  Then, $K_M = K^*$.
\end{prop}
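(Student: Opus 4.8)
The plan is to exploit the fact that for a linear target the gradient is a constant vector, so that the expected gradient outer product $M$ collapses to a rank-one matrix and the Mahalanobis kernel it induces coincides verbatim with the ideal kernel.

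First I would compute the gradient. Since $f(x) = \beta^T x$, we have $\nabla f(x) = \beta$ for every $x \in \mathbb{R}^d$, independently of $x$. Hence $\nabla f(x) \nabla f(x)^T = \beta \beta^T$ is a constant matrix, and taking the expectation over $x \sim \rho$ leaves it unchanged, giving $M = \mathbb{E}_x[\nabla f(x) \nabla f(x)^T] = \beta\beta^T$. It is worth noting that the density $\rho$ plays no role here, precisely because the integrand does not depend on $x$; all that is needed is that the expectation be well defined, which is immediate for a constant.

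Next I would substitute this $M$ into the Mahalanobis kernel and factor the resulting quadratic form. For any inputs $z, z'$,
\begin{align*}
K_M(z, z') = z^T M z' = z^T \beta \beta^T z' = (\beta^T z)(\beta^T z') = f(z)\, f(z') = K^*(z, z'),
\end{align*}
where the final equality is the definition of the ideal kernel (Definition~\ref{def:ideal_kernel}). Since this holds for all $z, z'$, we conclude $K_M = K^*$.

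There is essentially no obstacle to overcome: the statement follows directly from linearity of $f$, which forces the gradient to be constant, together with the rank-one structure of the outer product of a fixed vector. The only point worth flagging is the mild abuse in the statement where $z, z'$ are written as elements of $\mathbb{R}$; they should be read as $d$-dimensional inputs for $K_M$ to be meaningful. A natural remark to append is that the same three-line computation applies whenever $f$ is linear in the features supplied to the kernel, so the proposition is really a statement that the expected gradient outer product recovers the ideal kernel exactly in the linear regime and, by the debiasing/boosting analogy discussed in the paper, serves as the natural first-order surrogate for it in general.
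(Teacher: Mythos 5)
Your proof is correct and follows exactly the same route as the paper's: observe that $\nabla f(x) = \beta$ is constant, so $M = \beta\beta^T$, and then factor $z^T \beta \beta^T z' = f(z)f(z') = K^*(z,z')$. Your side remark that $z, z'$ should be read as elements of $\mathbb{R}^d$ rather than $\mathbb{R}$ is a fair catch of a typo in the statement.
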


\begin{proof}
    Note $\nabla f(x) = \beta$ for all $x$. Hence, $M = \beta \beta^T$, and $K_M(z, z') = z^T \beta \beta^T z' = f(z) f(z') = K^*(z, z')$.
\end{proof}

Moreover, the expected gradient outer product will provably reduce the sample complexity when the target function depends on only a few relevant directions in the data, as implied by the following proposition.

\begin{prop}
\label{prop:egop_low_rank_informal}
Let $x \in \mathbb{R}^{d}$ have density $\rho$ and let the target function $f^*: \Real^d \rightarrow \Real$ be a polynomial with degree $p$ and rank $r$, i.e., $f(x) = g(Ux)$ where $U \in \Real^{r \times d}$ and $g : \Real^r \to \Real$. Let ${M = \mathbb{E}_{x}\sbrac{\nabla f^*(x) \nabla f^*(x)^T} \in \Real^{d \times d}}$. Then, there exists a fixed polynomial kernel such that kernel ridge regression on the transformed data $(M^{\frac{1}{2}}X,y)$ has the minimax sample dependence on rank, $O(r^p)$. 
\end{prop}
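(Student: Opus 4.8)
The plan is to show that $M^{1/2}$ acts as an exact supervised dimension reduction: it collapses the input to an (at most) $r$-dimensional subspace while preserving the signal, after which a fixed degree-$p$ polynomial kernel realizes the target inside an $O(r^p)$-dimensional feature space. First I would differentiate $f^*(x) = g(Ux)$ by the chain rule to get $\nabla f^*(x) = U^T \nabla_u g(Ux)$, so that $M = U^T G U$ with $G := \mathbb{E}_x\big[\nabla_u g(Ux)\,\nabla_u g(Ux)^T\big] \in \mathbb{R}^{r\times r}$. Hence $\mathrm{rank}(M)\le r$ and $\mathrm{range}(M)\subseteq\mathrm{range}(U^T)$; under the mild non-degeneracy condition $G\succ 0$ — which is without loss of generality, since $\ker(G)$ consists exactly of the directions in which $g$ is constant (for $\rho$ of full support and $g$ polynomial), so a minimal representation $f^*=g(Ux)$ automatically has $G\succ 0$ — one has equality $\mathrm{range}(M)=\mathrm{range}(U^T)$, of dimension $r$. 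Since $M^{1/2}$ is symmetric PSD with $\mathrm{range}(M^{1/2})=\mathrm{range}(M)=\mathrm{range}(U^T)$, I can write $U^T = M^{1/2} C$ for some $C\in\mathbb{R}^{d\times r}$, whence $f^*(x) = g(Ux) = g\big(C^T M^{1/2}x\big) =: \tilde g\big(M^{1/2}x\big)$, where $\tilde g$ is again a polynomial of degree $p$, and the transformed inputs $z = M^{1/2}x$ lie almost surely in the $r$-dimensional subspace $V := \mathrm{range}(M^{1/2})$.

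Next I would fix the degree-$p$ polynomial kernel $K(z,z') = (1 + z^T z')^p$ on the transformed data. Its RKHS is the span of all monomials of degree at most $p$ in the coordinates of $z$; restricted to the $r$-dimensional subspace $V$ on which the data lives, this feature space has dimension $\binom{r+p}{p} = O(r^p)$ for fixed $p$. Because $\tilde g|_V$ is a polynomial of degree $\le p$ in $r$ variables, the target lies in this finite-dimensional RKHS, so the regression problem on $(M^{1/2}X, y)$ is realizable in $D := \binom{r+p}{p}$ effective dimensions.

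Then I would invoke the standard ridge-regression risk bound for a realizable target in a $D$-dimensional feature space: with a bounded kernel, bounded RKHS norm of the target, and bounded label noise, kernel ridge regression attains small excess risk with $\tilde O(D)$ samples, the factors involving $1/\varepsilon$ and the noise variance being $O(1)$ in $r$. Substituting $D = \binom{r+p}{p} = O(r^p)$ yields the claimed $O(r^p)$ sample dependence on the rank. For the \emph{minimax} claim I would pair this with the matching lower bound: the class of degree-$p$, rank-$r$ polynomials already contains $\binom{r+p}{p}$ linearly independent monomials in the $r$ relevant variables, so any estimator requires $\Omega(r^p)$ samples to identify an arbitrary member of this class, as in the lower-bound arguments of~\cite{PreetumLimitations, DLSRepresentationReLU}.

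The main obstacle I anticipate is the identity $\mathrm{range}(M) = \mathrm{range}(U^T)$ — that $M$ genuinely retains, rather than partially collapses, the relevant subspace. This rests on $G\succ 0$, which in turn needs a short argument that for $\rho$ with full support (or, more generally, with a non-degenerate pushforward under $U$) and a genuinely $r$-variate polynomial $g$, any direction in $\ker(G)$ is a direction of constancy of $g$ and can be removed from the representation. A secondary technical point is bounding the RKHS norm of $\tilde g$ under $K$ so that the generalization constants do not secretly depend on $r$; this reduces to a uniform bound on the coefficients of $\tilde g$ in a fixed monomial basis, which can be arranged by a normalization of $M^{1/2}$ (e.g.\ trace normalization) that controls $C$ and hence those coefficients.
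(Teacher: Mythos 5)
Your proposal is correct and follows essentially the same route as the paper's proof: observe via the chain rule that $\nabla f^*(x)$ lies in the span of $U^T$ so that $M^{\frac{1}{2}}X$ lives in an (at most) $r$-dimensional subspace, then apply a fixed degree-$p$ polynomial kernel whose effective feature dimension on that subspace is $\binom{r+p}{p} = O(r^p)$. You are in fact somewhat more careful than the paper at one point — explicitly verifying that the target remains realizable after the transformation, i.e., that $\ker(M)$ only annihilates directions of constancy of $f^*$ (your $G \succ 0$ discussion) — a step the paper's proof leaves implicit, and you also sketch the matching lower bound that the paper merely asserts by reference.
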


\begin{proof}[Proof of Proposition \ref{prop:egop_low_rank_informal}]
The gradient of the target function in directions orthogonal to the target subspace $U$ is $0$, as the function does not vary in these directions. Thus, $\nabla f^*(x)$ is in the span of $U$. Hence, for any $x' \in \Real^d$, as
\begin{align*}
    Mx' &= \mathbb{E}_{x \in X_N} \sbrac{\nabla f^*(x) \nabla f^*(x)^T x'}~,
\end{align*}
we have that $Mx'$ is also in the span of $U$. Therefore, the transformed data $M^{\frac{1}{2}} X$ lies in an $r$-dimensional subspace and has an equivalent representation in an $r$-dimensional coordinate space. Namely, for all $i,j \in [d]$, there exists $\alpha_i, \alpha_j \in \Real^r$ such that $\|x_i - x_j\| = \|\alpha_i - \alpha_j\|$. Further, the degree of the target function does not change under linear transformation or rotation. The final bound follows from the generalization error bound of linear regression for kernel ridge regression with a polynomial kernel of degree $r$. 
\end{proof}

\textbf{Remarks.} This result is in contrast to using a fixed kernel for which $\Omega(d^{p})$ samples are required to achieve better error than the trivial $0$-function by kernel ridge regression~\cite{ghorbani2021linearized}. While the above propositions assume we have knowledge of the expected gradient outer product of the target function, we note that related algorithms are optimal, even when the expected gradient outer product has not been estimated exactly. For example, kernel ridge regression using a Mahalanobis kernel with $M$ set to the neural feature matrix after 1 step of gradient descent gives the optimal dependence on the rank $r$ under certain conditions on the target function~\cite{DLSRepresentationReLU}.  We note that a related iterative procedure using kernel smoothers to simultaneously estimate a predictor and gradients achieves minimax optimality for low-rank function estimation~\cite{RecursiveMultiIndex}.

\newpage

\begin{figure}[!h]
    \centering
    \includegraphics[width=\textwidth]{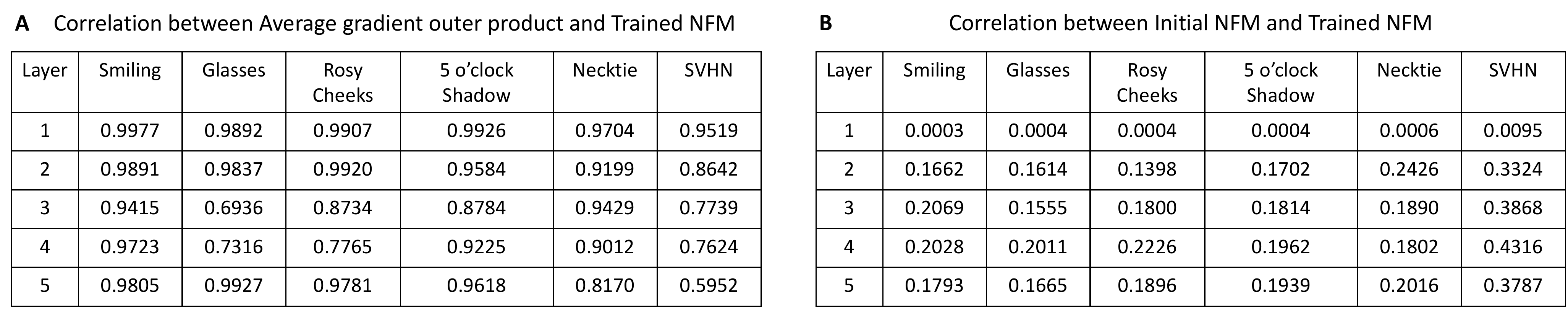}
    \caption{\textbf{(A)} The correlation between the \ajop and the trained NFM for each layer in five-hidden-layer ReLU fully connected networks trained on $6$ image classification tasks from CelebA and SVHN.  To compute the trained NFM, we subtract the layer weights at initialization from the final weights before computing the Gram matrix.  \textbf{(B)} The correlation between initial NFM and trained NFM for each layer in five-hidden-layer ReLU fully connected networks trained on $6$ image classification tasks from CelebA and SVHN.}
    \label{appendix fig: Ansatz CelebA SVHN}
\end{figure}

\newpage

\begin{figure}[!h]
    \centering
    \includegraphics[width=\textwidth]{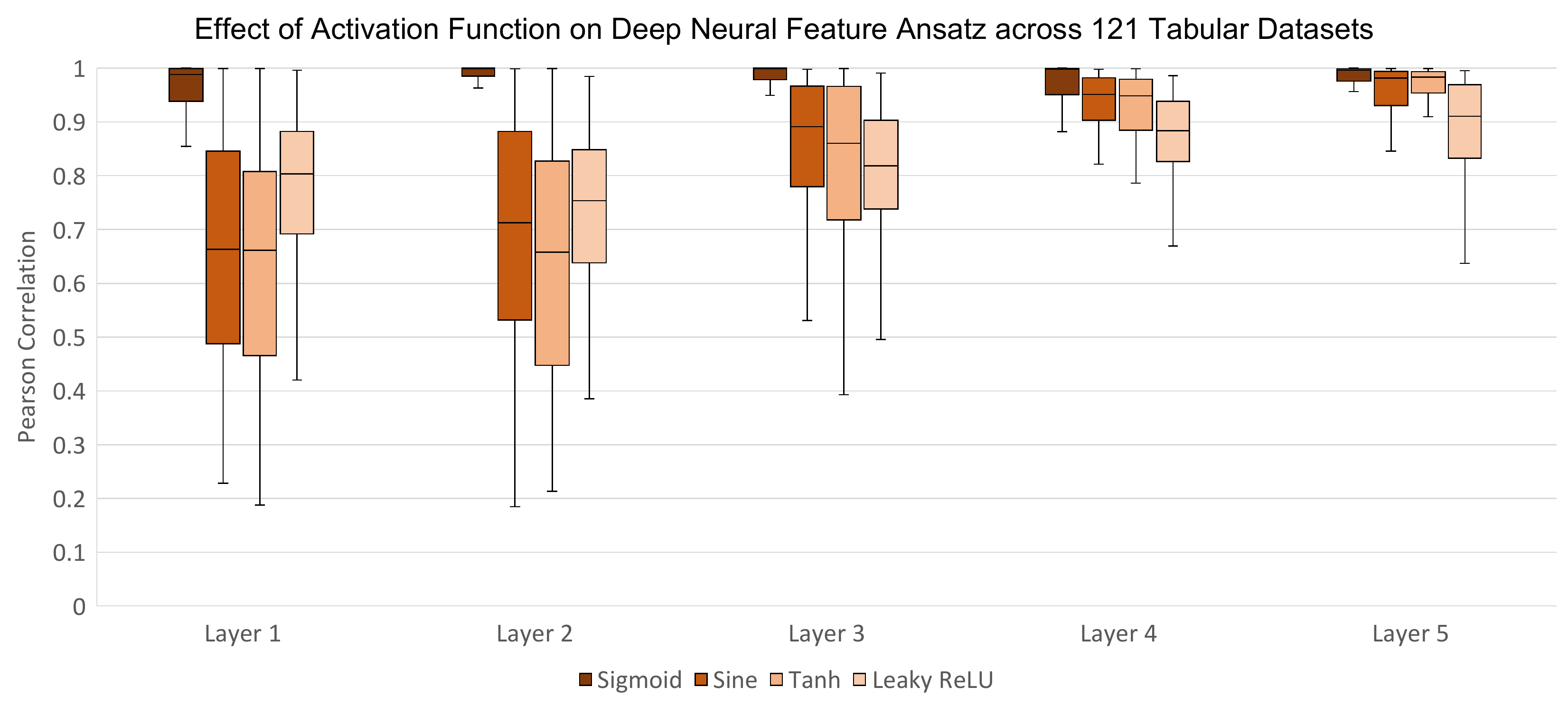}
    \caption{The correlation between the \ajop and the trained NFM for each layer in five-hidden-layer, width $64$ fully connected networks trained on $121$ tabular classification tasks using the Adam optimizer with learning rate of $10^{-4}$ and default initialization scheme from PyTorch across sigmoid, sine, hyperbolic tangent, and leaky ReLU activation.  To compute the trained NFM, we subtract the layer weights at initialization from the final weights before computing the Gram matrix.  For all activations across all layers, median correlation is above $.65$. }
    \label{appendix fig: Activation}
\end{figure}

\newpage

\begin{figure}[!h]
    \centering
    \includegraphics[width=\textwidth]{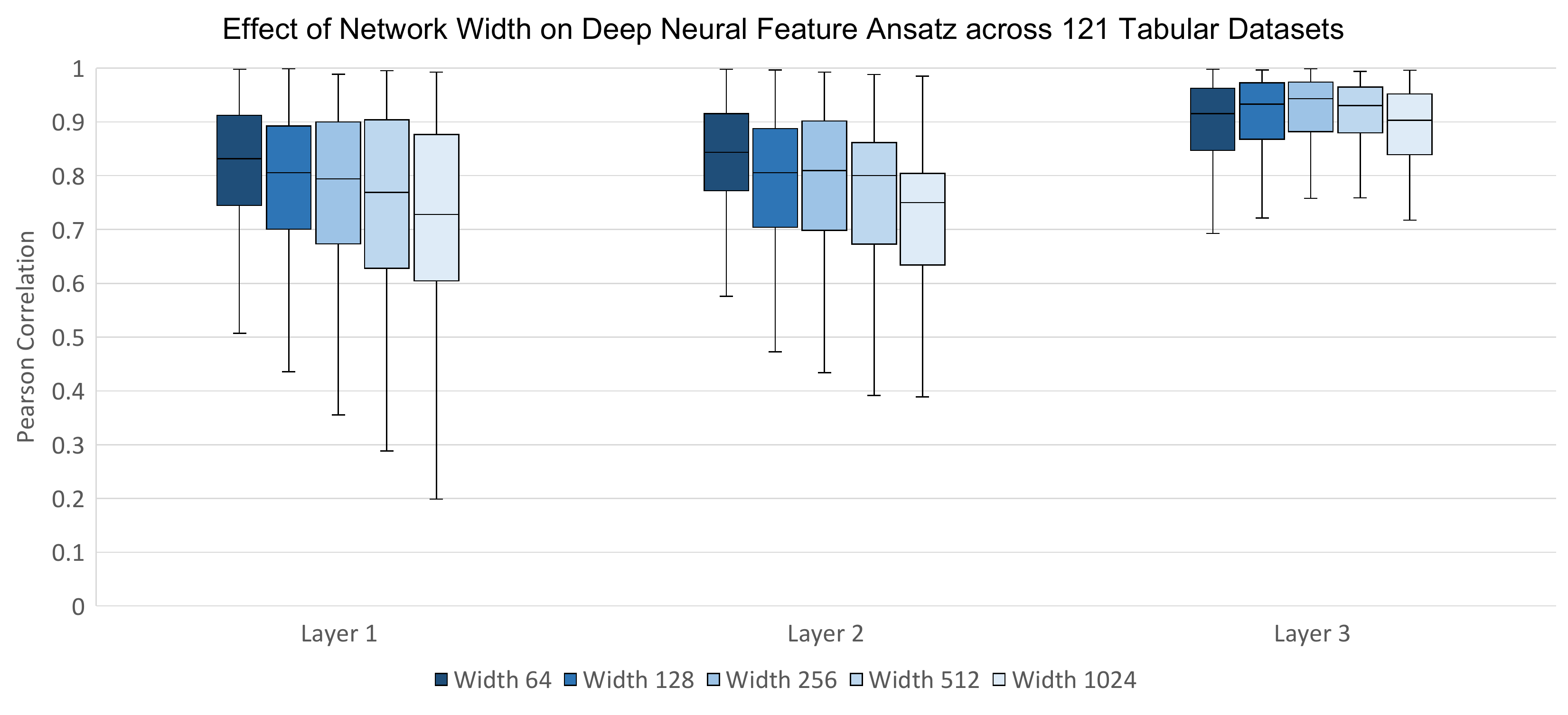}
    \caption{The correlation between the \ajop and the trained NFM for each layer in five-hidden-layer ReLU fully connected networks trained on $121$ tabular classification tasks using the Adam optimizer with learning rate of $10^{-4}$ and default initialization scheme from PyTorch across widths in the range $\{64, 128, 256, 512, 1024\}$.  We observe that lower width leads to higher correlation.  To compute the trained NFM, we subtract the layer weights at initialization from the final weights before computing the Gram matrix. }
    \label{appendix fig: Width}
\end{figure}

\newpage

\begin{figure}[!h]
    \centering
    \includegraphics[width=\textwidth]{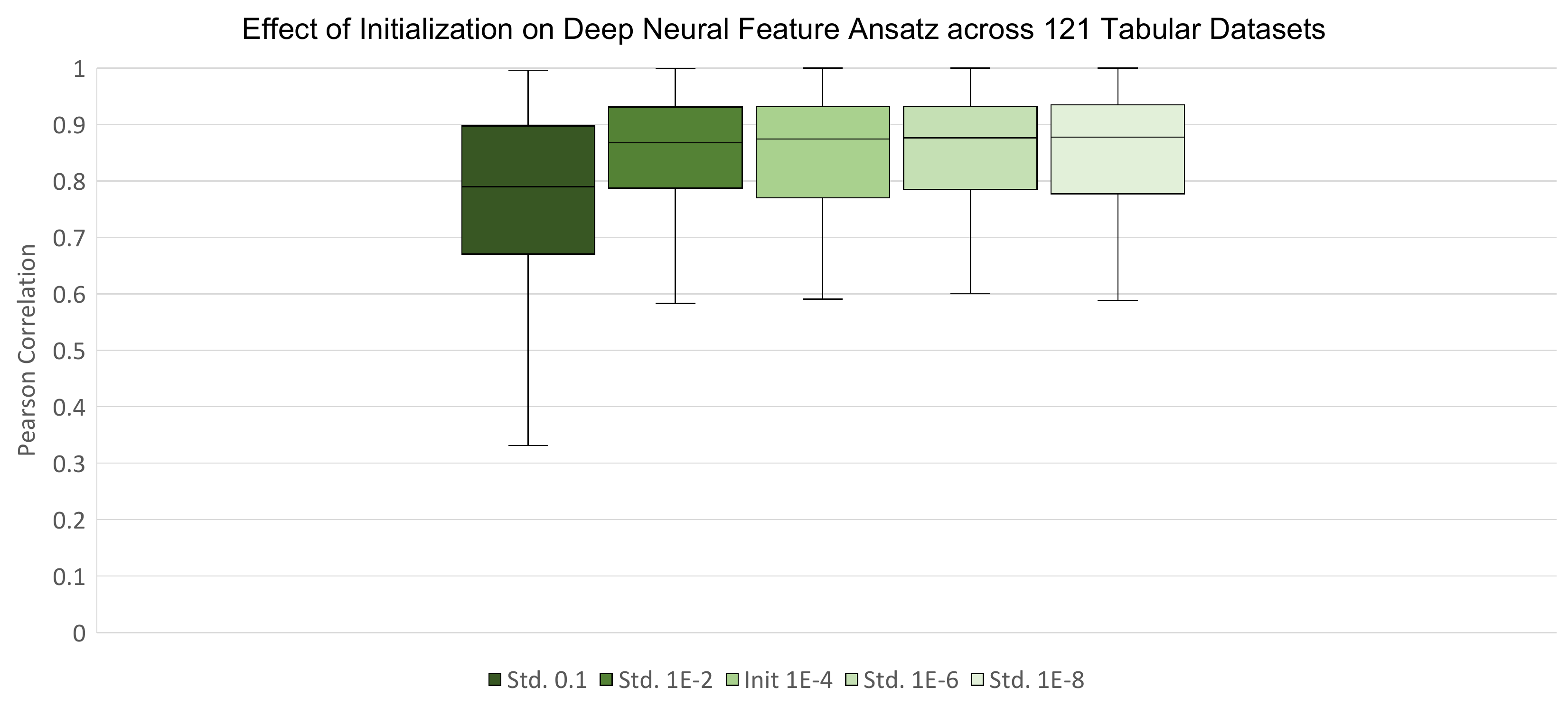}
    \caption{The correlation between the \ajop and the trained NFM for the first layer in $1$-hidden-layer, width $256$ ReLU fully connected networks trained on $121$ tabular classification tasks using the Adam optimizer with learning rate of $10^{-4}$ using an initialization scheme of Gaussian with mean $0$ and standard deviation in the range $\{10^{-1}, 10^{-2}, 10^{-4}, 10^{-6}, 10^{-8}\}$.  To compute the trained NFM, we subtract the layer weights at initialization from the final weights before computing the Gram matrix.  We observe that lower initialization scheme leads to higher correlation. }
    \label{appendix fig: Init}
\end{figure}

\newpage 

\begin{figure}[!h]
    \centering
    \includegraphics[width=\textwidth]{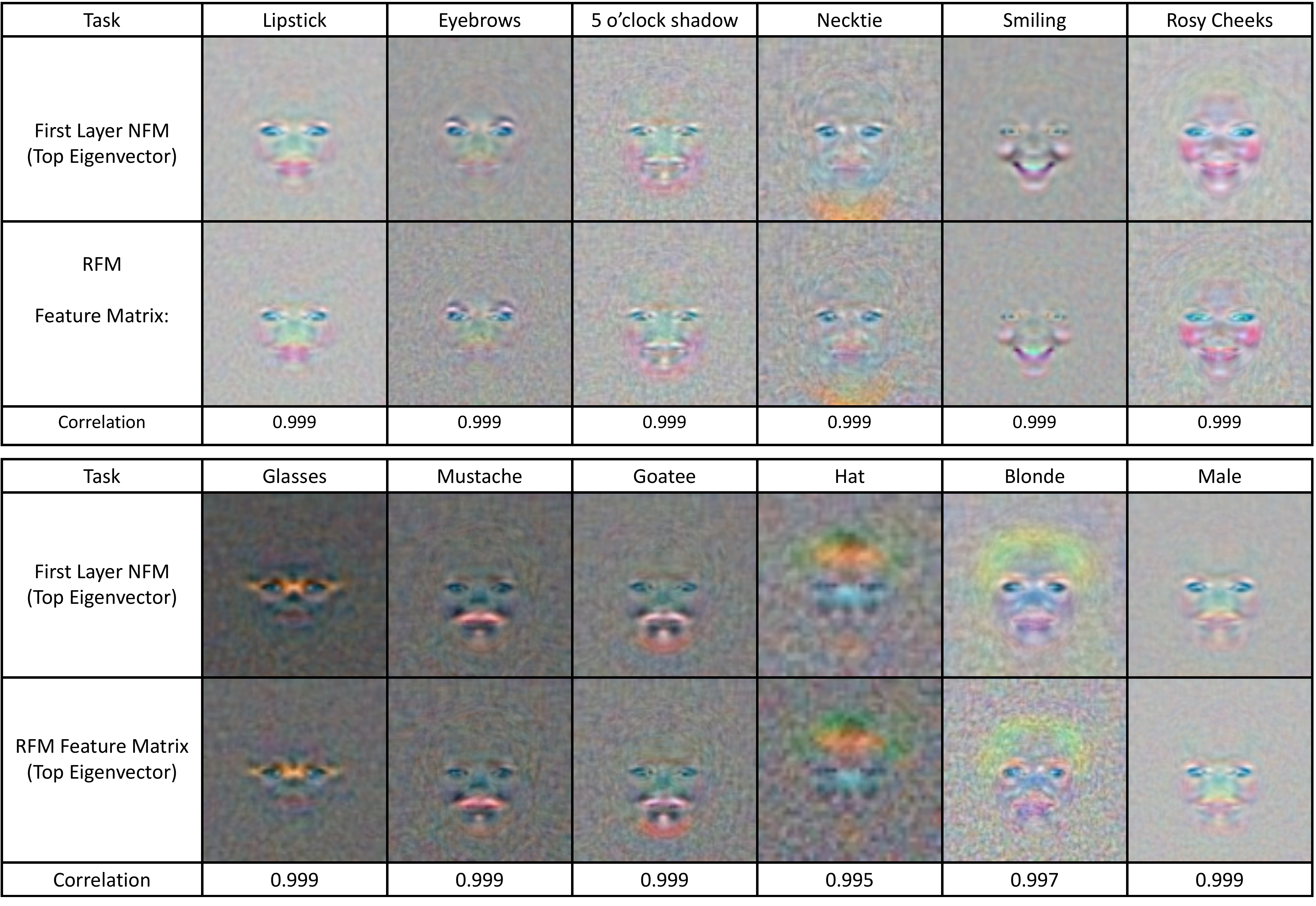}
    \caption{The top eigenvector of the first layer NFM from a two-hidden-layer, 1024 width ReLU network and from RFM feature matrices visually highlight similar features across 12 CelebA classification tasks.  These top eigenvectors are highly correlated (Pearson correlation greater than $0.99$).}
    \label{appendix fig: NN and RFM features CelebA}
\end{figure}

\newpage 

\begin{figure}[!h]
    \centering
    \includegraphics[width=\textwidth]{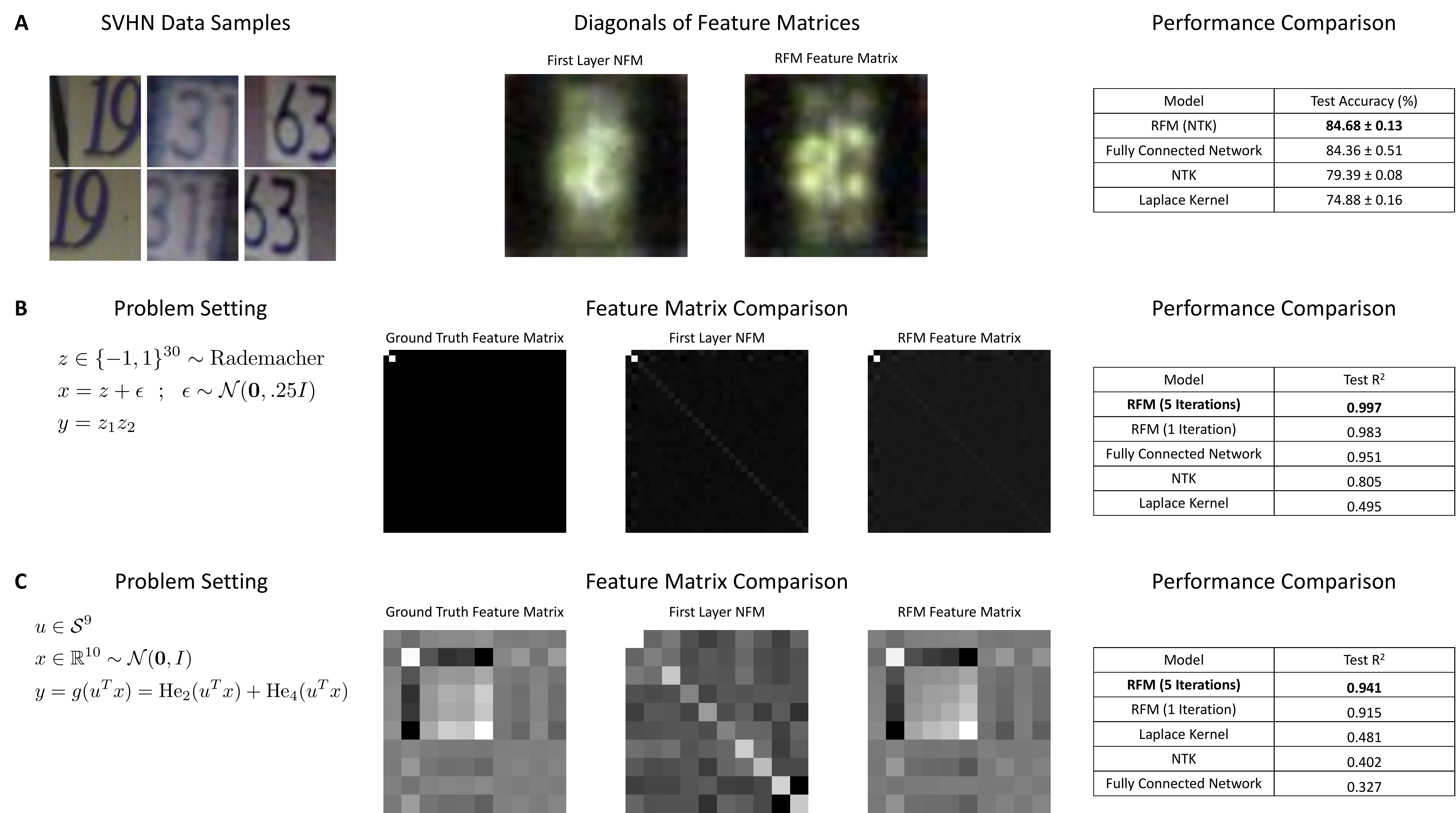}
    \caption{Comparison of first layer NFM features and RFM features and performance between two-hidden-layer, 1024 width ReLU fully connected networks and RFMs. \textbf{(A)} (Left) Samples from the SVHN dataset in which the goal is to identify the center digit from a view of potentially many digits. (Center) Upon visualizing the diagonals of the feature matrices of RFMs and deep networks, we observe that these models learn to select the center digit. (Right) By selecting the center digit, RFMs and deep networks provide a 10\% increase in test accuracy over Laplace kernels and a 5\% increase in test accuracy over NTKs.  \textbf{(B)}  (Left) We consider the low rank setup from~\cite{PreetumLimitations} in which the targets, $y$, are generated as a product of the first two coordinates of Rademacher random variables $z$.  (Center) Since we know the target function, we can compare the ground truth feature matrix against the first layer NFM of a 1 hidden layer ReLU fully connected network and the RFM feature matrix.  We observe that both models learn to select the top two coordinates.  (Right) The performance of RFMs and neural networks far exceeds of NTKs and Laplace kernels since these methods learn to select relevant coordinates for prediction. \textbf{(C)} (Left) The low rank setup from~\cite{DLSRepresentationReLU} in which the targets, $y$ are generated as a function of a projection of the input $x$ onto a 1 dimensional subspace.  Here, $u$ is on the unit sphere in $10$ dimensions and $\text{He}_2, \text{He}_4$ denote the second and fourth probabilist's Hermite polynomials. (Center) While RFMs learn to accurately approximate the ground truth gradient outer product, fully connected networks require additional training modifications, as discussed in~\cite{DLSRepresentationReLU}. (Right) As they learn the relevant subspace, RFMs far outperform 1 hidden layer ReLU fully connected networks, NTKs, and the Laplace kernel. }
    \label{appendix fig: NN and RFM features SVHN Low Rank}
\end{figure}

\begin{figure}[t]
    \centering
    \includegraphics[width=.8\textwidth]{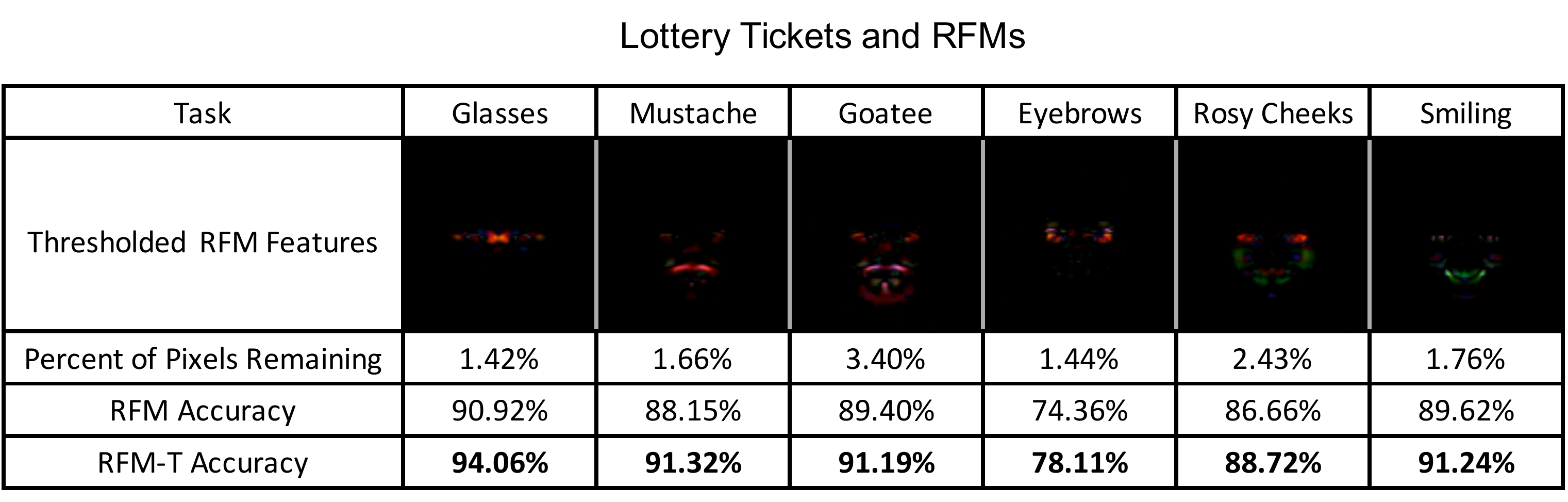}
    \caption{Connections between lottery tickets in deep networks and RFMs.  The diagonals of the feature matrices of RFMs trained on CelebA are sparse, thereby indicating that only a subset of coordinates is used for prediction.  Such sparsity suggests that we can threshold to very few pixels while still minimally affecting performance.  Indeed, re-training RFMs upon thresholding to less than $3.5\%$ of total pixels in CelebA tasks consistently improves performance for these tasks.}
    \label{appendix fig: RFM Lottery Ticket}
\end{figure}

\begin{figure}[t]
    \centering
    \includegraphics[width=.8\textwidth]{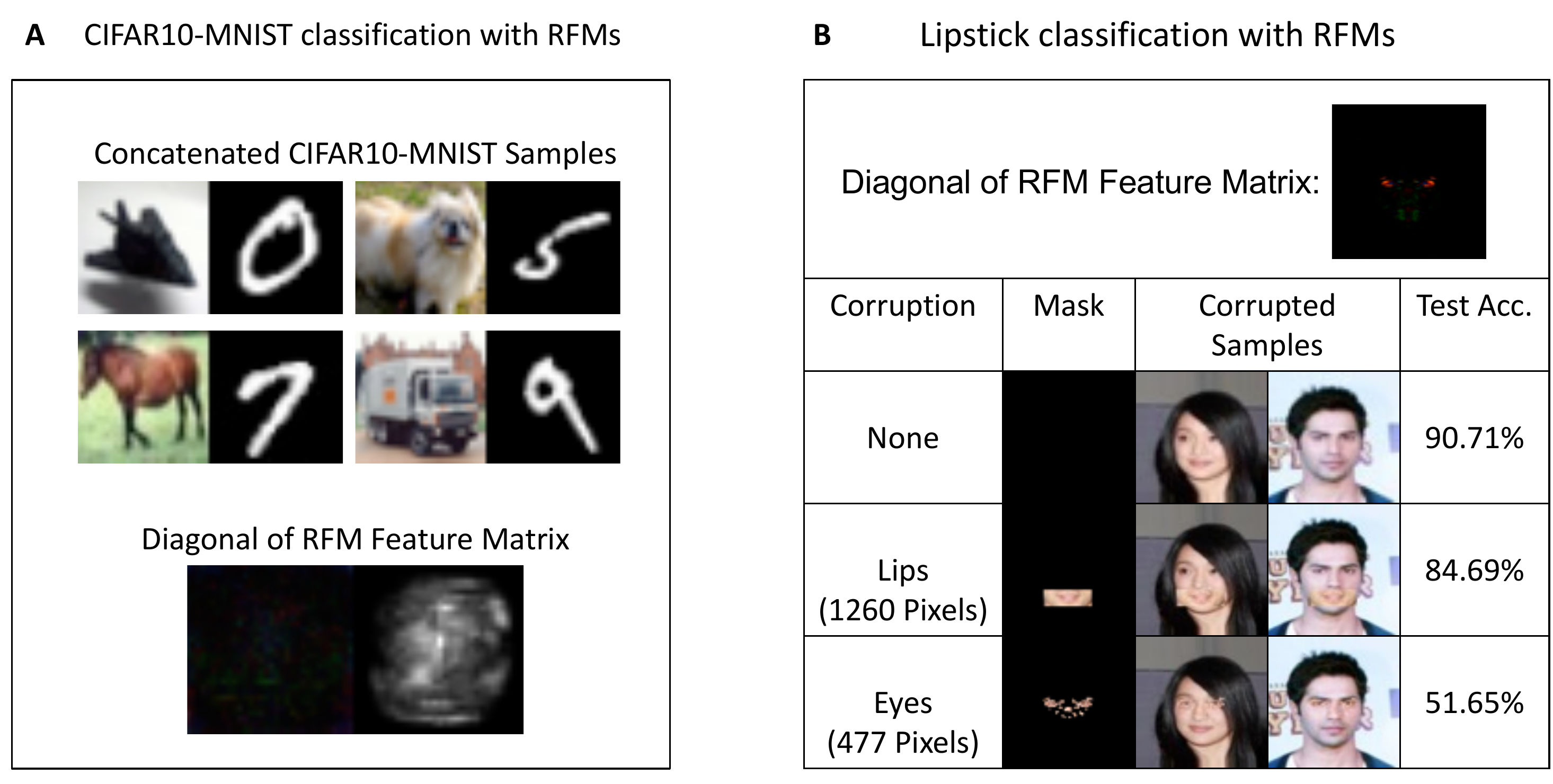}
    \caption{RFMs accurately capture simplicity biases of deep fully connected networks.  \textbf{(A)} We train RFMs on a dataset similar to that from~\cite{shah2020pitfalls} in which we concatenate images of CIFAR-10 objects with unique digits from MNIST.  Upon visualizing the diagonals of the feature matrices of RFMs, we observe that the model learns to mask the CIFAR-10 image and focus on the MNIST digit for prediction. \textbf{(B)} The diagonal of the feature matrix for an RFM trained on lipstick classification unusually indicates that eyes are used as a key feature.  We thus construct a mask based on the top RFM features and replace the eyes of all test samples with those of a single individual. The trained RFM does $39.06\%$ worse on these corrupted samples.  On the other hand, replacing the lips of all test samples with those from the same individual leads to only a minor, 6.02\%, decrease in accuracy.}
    \label{appendix fig: RFM Simplicity Bias}
\end{figure}

\newpage 

\begin{figure}[!h]
    \centering
    \includegraphics[width=1\textwidth]{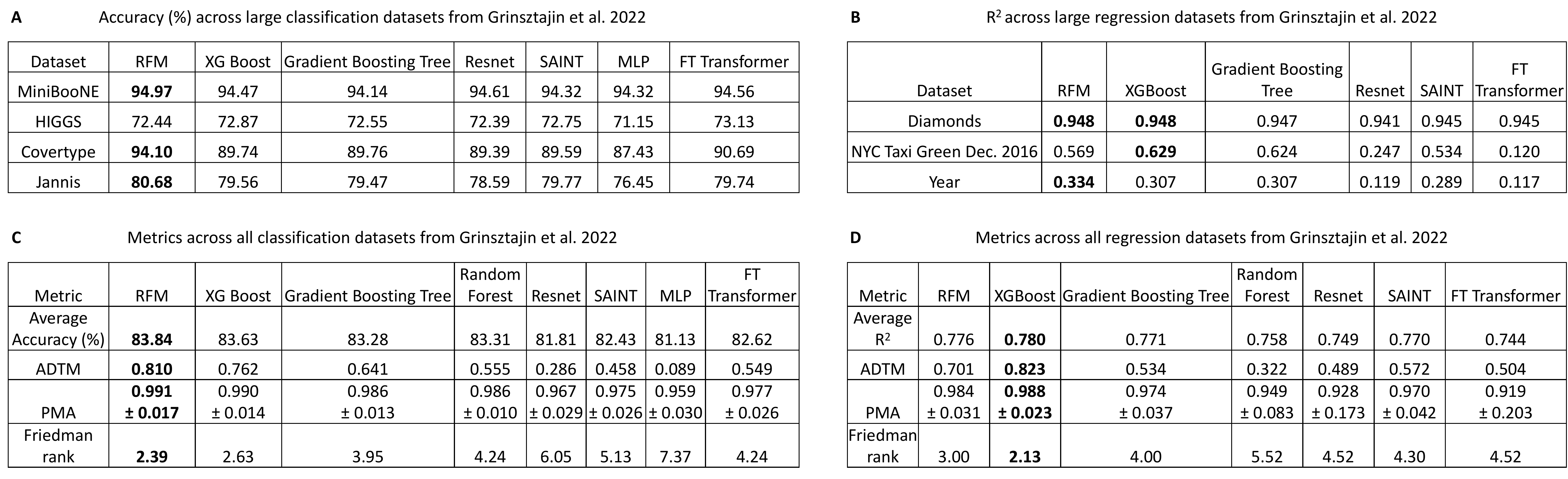}
    \caption{Performance of RFMs, XGBoost, Gradient Boosting Trees, Random Forests, ResNets, Transformers (SAINT and FT), and fully connected networks (MLPs) from~\cite{TabularDataBenchmark}. Our method used $3600$ hours in total, while all other methods used $20000$ hours for tuning, as reported in~\cite{TabularDataBenchmark}. Results from all models other than RFMs are reported from the tables provided by~\cite{TabularDataBenchmark}. All metrics and training details are outlined in Methods.  Large tasks have $50000$ training examples except for Jannis ($40306$ examples) and Diamonds ($37758$ examples). The medium tasks have at most $10000$ samples. We show (\textbf{A}) average accuracy on large classification tasks and (\textbf{B}) average $R^2$ on large regression tasks. We compare model performance through commonly used metrics across all datasets for (\textbf{C}) classification, and (\textbf{D}) regression.}
    \label{fig: Transformers}
\end{figure}

\newpage 

\begin{table}
\begin{adjustbox}{width=\textwidth}
\csvautotabular{Tables/reg_no_categorical.txt}
\end{adjustbox}
\caption{Regression $R^2$ without categorical variables.}
\label{appendix: table reg without cat}
\end{table}

\begin{table}
\begin{adjustbox}{width=\textwidth}
\csvautotabular{Tables/clf_no_categorical.txt}
\end{adjustbox}
\caption{Classification accuracy without categorical variables.}
\label{appendix: table class without cat}
\end{table}

\begin{table}
\begin{adjustbox}{width=\textwidth}
\csvautotabular{Tables/reg_yes_categorical.txt}
\end{adjustbox}
\caption{Regression $R^2$ with categorical variables.}
\label{appendix: table reg with cat}
\end{table}

\begin{table}
\begin{adjustbox}{width=\textwidth}
\csvautotabular{Tables/clf_yes_categorical.txt}
\end{adjustbox}
\caption{Classification accuracy with categorical variables.}
\label{appendix: table class with cat}
\end{table}

\end{document}